\documentclass[11pt]{article}

\usepackage{arxiv}               
\usepackage[hyphens]{url}
\usepackage{graphicx}
\usepackage[round]{natbib}
\let\cite\citep                  
\usepackage[labelfont=bf,font=small]{caption}
\newtheorem{lemma}{Lemma}
\newtheorem{proposition}{Proposition}

\newenvironment{proof}[1][Proof]{\noindent\textit{#1.}\ }{\hfill$\square$\smallskip}

\usepackage[dvipsnames]{xcolor}
\usepackage[many]{tcolorbox}
\tcbuselibrary{breakable,skins}
\definecolor{promptframe}{RGB}{54,88,128}    
\definecolor{promptback}{RGB}{246,248,251}   
\definecolor{prompttitle}{RGB}{235,240,247}  
\newtcolorbox{promptbox}[1]{colback=promptback, colframe=promptframe,
coltitle=promptframe!20!black, colbacktitle=prompttitle,
boxrule=0.5pt, arc=1.5pt, left=5pt, right=5pt, top=4pt, bottom=4pt,
title={\small\bfseries #1}, fonttitle=\sffamily, before skip=6pt, after skip=6pt}
\newcommand{\phold}[1]{\textcolor{promptframe}{\itshape\{#1\}}}

\usepackage{algorithm}
\usepackage{algorithmic}
\usepackage{newfloat}
\usepackage{amsfonts}
\usepackage{bm}
\usepackage{amsmath}
\usepackage{listings}
\floatstyle{ruled}
\newfloat{listing}{tb}{lst}{}
\floatname{listing}{Listing}

\usepackage{booktabs}
\usepackage{multirow}

\usepackage[colorlinks=true,linkcolor=BrickRed,citecolor=NavyBlue,urlcolor=NavyBlue]{hyperref}

\title{CoRE: Consensus Rewards via Equilibrium for Test-Time Reinforcement Learning}
\shorttitle{CoRE: Consensus Rewards via Equilibrium}

\author{%
  Ambuj Mehrish \\
  CVML Lab \\
  Ca' Foscari University of Venice \\
  \texttt{ambuj.mehrish@unive.it} \\
  \and
  Sebastiano Vascon \\
  CVML Lab \\
  Ca' Foscari University of Venice \\
  \texttt{sebastiano.vascon@unive.it} \\
}
\date{}

\begin{document}

\maketitle

\begin{abstract}
On unlabeled test data, reinforcement learning lacks a ground-truth reward; test-time RL methods derive one from the model's own roll-outs, rewarding those that match the majority vote over $N$ sampled answers. That vote discards a correct answer whenever it is a minority and scores every majority-matching roll-out identically. We replace it with \emph{CoRE} (Consensus Rewards via Equilibrium): the $N$ roll-outs form a graph whose edges combine answer agreement, reasoning similarity, and generation confidence, and replicator dynamics extract its dominant set, yielding a refined pseudo-label, a graded per-roll-out reward, and a per-question cohesiveness gate. CoRE strictly generalizes voting: majority voting is recovered as a special case; a block-value analysis gives a sharp threshold for when consensus recovers a correct minority against a larger wrong plurality; and confidence calibration provably lowers that threshold multiplicatively. Across seven backbones and five benchmarks (42 model--benchmark cells, three seeds each), \emph{CoRE} improves the untrained base by $+21.7$ points on average versus $+20.4$ for majority-vote TTRL, wins wherever agreement is contestable with margins over the vote of up to $+7.5$ points, and reaches the voting baseline's plateau accuracy in $54$--$70$\% fewer steps. Consensus, not counting: treating the roll-out group as a graph rather than a ballot box turns a brittle vote into a calibrated, graded, self-supervised reward at no extra roll-out cost.
\end{abstract}

\section{Introduction}
\label{sec:intro}
Reinforcement learning has advanced language-model reasoning~\cite{guo2025deepseek,jaech2024openai,team2025kimi}, but typically requires labeled rewards~\cite{ouyang2022training,stiennon2020learning}. Test-time reinforcement learning (TTRL) instead derives supervision from the model's own roll-outs on unlabeled test questions~\cite{zuo2026ttrl,yuan2024self,zelikman2022star}. It uses the majority answer as a pseudo-label, rewards agreeing trajectories, and updates the policy. Although TTRL permits other aggregation rules, majority voting remains the default and can yield models that outperform the vote used to train them~\cite{shao2025spurious,xie2019self}.

This apparent paradox arises because model errors often scatter across multiple answers. Even when the majority label is wrong, many incorrect roll-outs disagree with it and still receive negative rewards, preserving part of the training signal. The failure mode is concentrated error: a systematic mistake becomes the plurality while the correct derivation remains in the minority. Majority voting then rewards the wrong trajectories, penalizes the correct ones, and can amplify the error across updates~\cite{arazo2020pseudo,gao2023scaling}.

Two additional limitations arise from the same reduction. First, a binary agreement reward assigns the same value to a well-supported derivation and a lucky guess. Second, each question contributes equally to training, regardless of whether its roll-outs form a coherent consensus or a fragmented set of incompatible solutions. Majority voting cannot distinguish these cases because it compresses each trajectory to a final answer and each answer class to a count~\cite{zuo2026ttrl}. It ignores both the model's confidence during generation and the relational structure among reasoning paths.

We address this limitation with \emph{CoRE}: Consensus Rewards via Equilibrium. \emph{CoRE} represents roll-outs as a graph whose edges connect trajectories with matching answers and weight their reasoning similarity by generation confidence. Replicator dynamics~\cite{pavan2007dominant} identifies a dominant set of mutually supporting trajectories, yielding a refined pseudo-label, graded roll-out rewards, and a cohesiveness gate for question-level updates. \emph{CoRE} integrates directly into GRPO~\cite{shao2024deepseekmath}, requires no labels or auxiliary models, and uses the same roll-outs as majority-based test-time reinforcement learning. Majority voting remains a special case of the formulation.

\emph{CoRE}'s dynamics begin from majority voting because uniform initialization assigns each answer class mass proportional to its size. A minority can therefore recover through coherence alone only if its internal support exceeds that of the majority by a factor that grows with the class-size imbalance. This predicts that graph structure rarely improves over voting by itself. Confidence-weighted voting is similarly limited: it rescales class counts but does not model support among reasoning trajectories.

The two signals are complementary. Confidence calibration rescales each answer class by its members' mean confidence, lowering the coherence needed for a correct minority to recover. Under our assumptions, this threshold decreases exponentially with the confidence gap between correct and incorrect roll-outs. Confidence can still fail when incorrect answers are systematically overconfident. Empirically, removing either confidence or graph structure reduces \emph{CoRE} to near-majority-vote performance, whereas combining them yields consistent gains.

We evaluate \emph{CoRE} on seven backbones from four developers, spanning math-specialized, base, and instruct models across AMC, AIME~2024, MATH-500 and its Level-4/5 subsets, and GPQA-Diamond, with three random seeds per setting. Across 42 model--benchmark pairs, \emph{CoRE} improves accuracy by $+21.7$ points over the untrained base, compared with $+20.4$ for majority-vote TTRL. It is the strongest RL variant in every model family, outperforming voting by up to $+7.5$ points when consensus is uncertain and remaining within seed variation when the majority is reliable. \emph{CoRE} also reaches the voting baseline's final accuracy in 54--70\% fewer optimization steps (Fig.~\ref{fig:efficiency}). These results suggest that \emph{CoRE} complements majority voting by recovering supervision from smaller, coherent, and confident correct groups that vote counts can miss.

\paragraph{Contributions.}
\begin{itemize}
\item \textbf{Method.} \emph{CoRE}, an equilibrium-based consensus reward for test-time RL that turns the same $N$ roll-outs into a refined pseudo-label, graded per roll-out rewards, and a question-level cohesiveness gate, with no auxiliary models or extra roll-outs.
\item \textbf{Theory.} Majority voting is a special case of \emph{CoRE}. A block-value analysis gives the threshold at which a correct minority overturns an incorrect plurality and shows that confidence calibration lowers it multiplicatively.
\item \textbf{Evidence.} Across 42 model--benchmark settings and seven backbones, \emph{CoRE} improves over the untrained base by $+21.7$ points on average, compared with $+20.4$ for TTRL (Tables~\ref{tab:cat-math}--\ref{tab:cat-vanilla}). It is the strongest RL method in every model family, exceeds voting by up to $+7.5$ points when agreement is contested, and reaches the vote's plateau in 54-70\% fewer steps (Fig.~\ref{fig:efficiency}). Both its gains and failures follow the regimes predicted by the analysis.
\end{itemize}
\section{Related Work}

\paragraph{Test-time adaptation and test-time RL.}
Test-time adaptation uses unlabeled inputs to update model behavior, either through auxiliary self-supervision or entropy minimization~\cite{sun2020test,liang2023comprehensive,wang2020tent}. For LLMs, test-time scaling improves accuracy with additional inference compute~\cite{snell2024scaling,brown2024large}, while TTRL converts majority votes over sampled roll-outs into reinforcement-learning rewards~\cite{zuo2026ttrl}. Recent label-free methods instead optimize semantic-cluster entropy, intrinsic entropy, self-certainty, or majority-based self-training~\cite{zhang2026right,prabhudesai2025maximizing,zhao2025learning,shafayat2025can}. These approaches extend earlier work on self-generated supervision, self-reward, and self-play~\cite{zelikman2022star,yuan2024self}, alongside unsupervised reasoning objectives and agreement-based rewards~\cite{xu2025genius,zhang2025co}. Related analyses suggest that entropy dynamics and robustness to imperfect rewards partly explain why such updates can succeed~\cite{cui2025entropy,shao2025spurious}. \emph{CoRE} likewise trains on unlabeled test questions, but improves the consensus aggregator that produces the reward rather than modifying the adaptation loop itself.

\paragraph{Aggregating multiple samples at inference.} Self-consistency marginalizes over reasoning paths by majority vote \cite{wang2022self}, and confidence-informed self-consistency (CISC) replaces uniform votes with a confidence-weighted vote, cutting the required number of paths "by over 40\% on average"
\cite{taubenfeld2025confidence}. The accuracy of such compound inference schemes scales predictably with the number of model calls \cite{chen2024more}. A parallel line reweights or reranks samples with a learned verifier rather than by vote, using outcome-based signals \cite{cobbe2021training}, process- versus outcome-based feedback \cite{uesato2022solving,lightman2024let}, and step-aware verifiers that score intermediate reasoning \cite{li2023making}. These operate only at inference; \emph{CoRE} instead turns aggregation into a training signal, and confidence-weighted voting is one of its ablation arms.

\paragraph{Dominant sets and replicator dynamics.} The Motzkin--Straus theorem relates the maximum clique problem to a quadratic optimization over the simplex~\cite{motzkin1965maxima}, later extended through regularization~\cite{bomze1997evolution}. Dominant sets generalize maximal cliques to weighted graphs and recover coherent clusters through replicator dynamics~\cite{pavan2007dominant,bulo2017dominant} (RD). This framework, drawn on evolutionary game theory and the Baum--Eagon inequality~\cite{taylor1978evolutionary,weibull1997evolutionary,baum1967inequality}, elegantly bridges together combinatorial optimization (clique search), optimization (maximization of a quadratic functional), and dynamical systems (stable points search). To our knowledge, \emph{CoRE} is the first method to use dominant-set extraction to construct rewards for reinforcement learning and for consensus reaching.

\paragraph{Policy optimization.} \emph{CoRE} builds on GRPO~\cite{shao2024deepseekmath}, following policy-gradient methods from PPO and RLHF to critic-free variants~\cite{schulman2017proximal,ouyang2022training,ahmadian2024back,li2023remax}. Since group normalization removes reward magnitude~\cite{liu2025understanding}, \emph{CoRE} applies its cohesiveness gate to the loss rather than the reward. This preserves the advantage estimate and reduces exposure to reward over-optimization~\cite{gao2023scaling,skalse2022defining,pan2022effects,amodei2016concrete}.

\section{\emph{CoRE}: Method and Analysis}
\label{sec:method}

\subsection{Problem Setup}
\label{sec:setup}
Let $\mathcal{D}$ be an unlabeled test set and $\pi_\theta$ a policy language model. Test-time reinforcement learning adapts $\pi_\theta$ on $\mathcal{D}$ itself: for each question $q$, the policy samples $N$ i.i.d.\ roll-outs $o_1,\dots,o_N \sim \pi_\theta(\cdot \mid q)$, a consensus function maps the group to a pseudo-label $y^{*}$ and per-roll-out rewards $\{R_i\}_{i=1}^{N}$, and the policy is updated by group-relative policy optimization (GRPO)~\cite{shao2024deepseekmath} on those rewards. TTRL \cite{zuo2026ttrl} instantiates the consensus function with majority voting: $y^{*}_{\mathrm{vote}} = \arg\max_a |\{i : \hat y_i \equiv a\}|$ and $R_i = \mathbb{1}[\hat y_i \equiv y^{*}_{\mathrm{vote}}]$, where $\hat y_i$ is the answer extracted from $o_i$. Since this self-generated signal is the only supervision the model receives, the consensus function determines what test-time RL can learn; it is our object of study, with the sampling procedure and the policy-optimization update rule held fixed.
\begin{figure*}
    \centering
    \includegraphics[width=\linewidth]{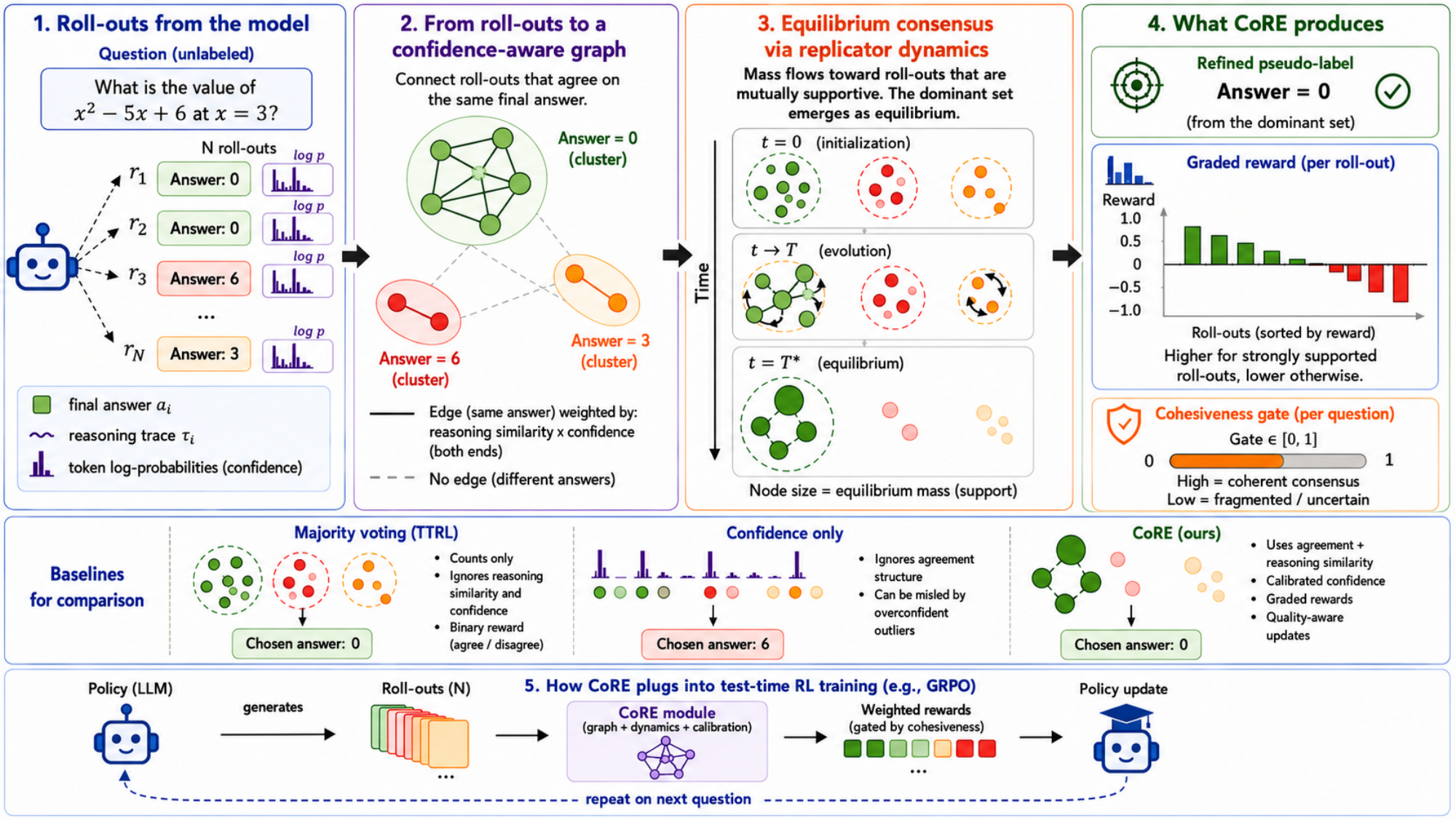}
    \caption{\textbf{CoRE overview.} Roll-outs become a graph whose edges combine answer agreement, reasoning similarity, and calibrated confidence; replicator dynamics extract the dominant set, yielding a refined pseudo-label, graded per-roll-out rewards, and a cohesiveness gate that plug directly into GRPO.}
    \label{fig:placeholder}
\end{figure*}
Each roll-out exposes three signals, all computable without model internals or external supervision: the extracted answer $\hat y_i$, compared by symbolic equivalence rather than string match; a reasoning representation $h_i \in \mathbb{R}^d$, a TF--IDF $n$-gram vector of the reasoning text~\cite{sparck1972statistical,salton1988term}; and a generation confidence $c_i = \tfrac{1}{|o_i|}\sum_t \log \pi_\theta(o_{i,t} \mid q, o_{i,<t})$, the mean token log-probability~\cite{kadavath2022language,tian2023just,lin2022teaching}. Majority voting consumes only the first, and only its mode. Throughout, the reward is computed solely from these signals; ground-truth answers are used only for final evaluation and for the post-hoc diagnostic analyses of \S\ref{sec:analysis}, and never enter the training
loop.
\subsection{Consensus Reward Construction}
\label{sec:core}
\emph{CoRE} replaces majority voting with dominant-set extraction over a roll-out graph that encodes answer agreement, reasoning similarity, and group cohesiveness. A dominant set generalizes the maximal clique concept to weighted graphs, allowing a compact and mutually supportive subset of trajectories to emerge from pairwise interactions. CoRE therefore produces three signals rather than TTRL's two: a refined pseudo-label $y^{*}$, a graded reward $R_i \in [0,1]$, and a question-level cohesiveness gate $w_q \in [0,1]$.


\paragraph{The roll-out graph}
The roll-out graph is graph $G=(V,E,\omega)$ where $V$ is the set of roll-out answers generated by the model and $E\subseteq V\times V$ is the set of edges connecting different nodes, weighted by the function $\omega_{i,j}: (i,j) \in E \rightarrow \mathbb{R_{\geq\text{0}}}$. The graph $G$ is encoded into an affinity matrix $A'$

\paragraph{Affinity graph.}
Given two nodes $i,j \in V$, two kernels compare the corresponding roll-outs: a hard answer kernel
$K_{\mathrm{ans}}(i,j) = \mathbb{1}[\hat y_i \equiv \hat y_j]$ and a soft
reasoning kernel
$K_{\mathrm{rsn}}(i,j) = \tfrac12\bigl(1+\cos(h_i,h_j)\bigr) \in [0,1]$.
They combine multiplicatively,
\begin{equation}
A_{ij} \;=\; K_{\mathrm{ans}}(i,j)\,
\bigl[\kappa + (1-\kappa)\,K_{\mathrm{rsn}}(i,j)\bigr],
\qquad A_{ii}=0,
\label{eq:affinity}
\end{equation}
so that roll-outs with different answers share \emph{no} edge regardless of how similar their prose is: $A$ is block-diagonal over answer classes, and reasoning similarity can strengthen ties only \emph{within} a class. The floor $\kappa$ guarantees agreeing roll-outs a minimum affinity even when
their derivations diverge lexically; $\kappa$ is the generalization knob, and at $\kappa=1$ the graph reduces to pure answer agreement (Prop.~\ref{prop:majority}).

\paragraph{Confidence calibration.}
Node weights $w_i = \exp\bigl((c_i - \max_j c_j)/\tau\bigr) \in (0,1]$ convert confidences to relative weights anchored at the most confident roll-out. They calibrate the edges symmetrically,
\begin{equation}
A'_{ij} \;=\; A_{ij}\,\sqrt{w_i w_j},
\label{eq:calib}
\end{equation}
A diagonal congruence $A' = D^{1/2} A D^{1/2}$, $D=\operatorname{diag}(w)$. The geometric mean is a symmetric choice. It preserves the block structure exactly, keeps an edge only when both endpoints are confident, and makes a cluster’s internal score scale linearly with its average confidence. Proposition~\ref{prop:confidence} uses this last property.

\paragraph{Avoid singletons}
To avoid singletons, self-loops are added to the calibrated rollout-graph $W = A' - \alpha I$. The self-competition term $-\alpha$ on the diagonal penalizes small clusters and, per Lemma~\ref{lem:singleton} (see Appendix), excludes singletons outright. Since the diagonal
is negative, we shift $W' = W + C\,e e^{\!\top}$ with $C \ge \alpha$ to obtain a nonnegative affinity matrix, where $e$ is the all-ones vector. The blocks external to the main diagonal are kept equal to 0 to avoid incoherent consensus.

\paragraph{Dominant Set Extraction}
Given a readout graph $G=(V,E,\omega)$, a DS is extracted by maximizing a standard quadratic assignment problem $x^{\!\top}Ax$ over the standard simplex. The optimization is performed by iterating the replicator dynamics
\begin{equation}
x(t+1)_i \;\leftarrow\; x(t)_i \,\frac{(W'x(t))_i}{x(t)^{\!\top}W'x(t)}
\label{eq:replicator}
\end{equation}
until convergence $||x(t)-x(t+1)|| < \epsilon \text{ or } T_{\max}\text{ iterations.}$
Where $x(t)_i$ is the $i$-th component of vector $x$ at time $t$. Eq \ref{eq:replicator} starts from the barycenter of the simplex $x(0) = e/N$, leaving the same chances to each node to get extracted as part of the DS.
At convergence, the dynamic system reaches a stable point, denoted by $x^{*}$. $x^{*}$ concentrates the mass on the most mutually supporting nodes; its support $\sigma = \{\,i : x^{*}_i > \varrho\,\}$ is the consensus. Three lemmas (Appendix, Lemmas~\ref{lem:shift}--\ref{lem:singleton}) establish that the construction is sound: the nonnegativity shift is invariant on the simplex, so $W$ and $W'$ share all maximizers and replicator fixed points (Lemma~\ref{lem:shift}); the update~\eqref{eq:replicator} is a Baum--Eagon growth transform, ascending monotonically and converging to fixed points whose supports are dominant sets (Lemma~\ref{lem:ascent}); and for $\alpha>0$ no singleton is ever selected when a genuine answer clique exists (Lemma~\ref{lem:singleton}).







\paragraph{Read-outs.}
Three signals are read from the equilibrium:
\begin{equation}
\begin{split}
y^{*} &= \operatorname{arg\,max}_{a}\,
{\textstyle\sum_{i:\,\hat y_i\equiv a}}\, x^{*}_i,\\
R_i &= \frac{(Ax^{*})_i}{\max_j (Ax^{*})_j},
\qquad
w_q = x^{*\top}\!A\,x^{*}.
\end{split}
\label{eq:readouts}
\end{equation}
The pseudo-label is the answer with the greatest equilibrium mass, so it may differ from the count-based plurality. Each roll-out is rewarded by its affinity to this consensus using the uncalibrated matrix $A$: confidence already shapes $x^{*}$, and using $A'$ again would double-count it. The question-level gate $w_q$ measures consensus coherence and scales the loss rather than the rewards, since GRPO normalization cancels any per-question reward scaling. It therefore leaves advantage estimation, clipping, and optimization unchanged, with $w_q \equiv 1$ recovering the standard objective. TTRL is the special case $(\kappa{=}1,\ w{\equiv}e,\ w_q{\equiv}1)$.
\subsection{When Does Consensus Beat the Vote?}
\label{sec:theory-sub}

Because the answer kernel gates all edges, the affinity matrix is block-diagonal: each answer class $V_a$ (size $n_a=|V_a|$) is a connected component, where $a \in \mathbb{A}$ is the set of possible answers. For $x\in\Delta$ write $s_a = \sum_{i\in V_a} x_i \leq 1$ for the mass on class $a$; the objective decomposes as $x^{\!\top} A' x = \sum_{a \in \mathbb{A}} s_a^2\, \varphi_a(x)$, where $\varphi_a$ is the class's internal coherence score, converging under the within-class dynamics to a limit $\mu_a$ (for a uniform clique of affinity $c$,
$\mu_a = c\,(1-1/n_a)$). Two questions arise: which class the \emph{global optimum} prefers, and which class the \emph{replicator dynamics, started from the barycenter, actually select} as the global maximizer lies in the class with the largest \emph{intensive} score $\mu_a$. 
The RD starts from the barycenter, allocating initial mass in proportion to class size; therefore the trajectory starts from the basin of the class with the largest \emph{extensive} score $n_a\mu_a$. The initialization literally encodes the ballot, leaving mass to the other possible outcome proportionally to their class size; coherence and confidence are what can overturn the convergence to the largest class size. All proofs are in Appendix~\ref{app:proofs}.


\begin{proposition}[Majority voting is a special case]\label{prop:majority} With answer-only affinity ($\kappa{=}1$) and uniform confidence ($w\equiv e$), the extracted class maximizes $n_a\mu_a = n_a - 1 - \alpha$, which is strictly increasing in $n_a$: the pseudo-label $y^{*}$ equals the plurality answer whenever the largest class is unique.
\end{proposition}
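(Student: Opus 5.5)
With $\kappa=1$ and $w\equiv e$ I would first make the matrices explicit. The affinity \eqref{eq:affinity} becomes $A_{ij}=\mathbb{1}[\hat y_i\equiv\hat y_j]$ for $i\neq j$ and $A_{ii}=0$, and the calibration \eqref{eq:calib} leaves it unchanged, so $A'=A$. Each answer class $V_a$ is then an unweighted clique of size $n_a$. The regularized matrix $W=A-\alpha I$ has on block $a$ the form $J_{n_a}-(1+\alpha)I_{n_a}$, where $J$ is the all-ones matrix, and zeros off the blocks. By Lemma~\ref{lem:shift} the shift to $W'$ changes neither the maximizers nor the replicator fixed points on $\Delta$, so I can work with $W$ throughout.

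The key computation is the within-class dynamics and its extensive score. By symmetry, the barycenter start $x(0)=e/N$ stays constant within each class for all $t$: the update \eqref{eq:replicator} treats nodes of a clique identically, so equal coordinates are preserved. I would therefore write $x_i=s_a/n_a$ for $i\in V_a$ and reduce everything to the class masses $s_a$.

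For $i\in V_a$ one gets $(Wx)_i=s_a\bigl(1-(1+\alpha)/n_a\bigr)$, i.e. $s_a(n_a-1-\alpha)/n_a$. The internal score of the uniform clique is then $\mu_a=(n_a-1-\alpha)/n_a$, which is the $c(1-1/n_a)$ formula with the $\alpha$ correction. Hence $n_a\mu_a=n_a-1-\alpha$.

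Substituting into \eqref{eq:replicator}, the reduced dynamics is
\[
s_a(t+1)=s_a(t)\,\frac{s_a(t)\,\mu_a}{\sum_b s_b(t)^2\mu_b}.
\]
The shifted version $W'$ adds a constant $C$ to numerator and denominator but keeps the same ordering. Starting from $s_a(0)=n_a/N$, I would compare the ratio $r(t)=s_a(t)/s_b(t)$ for two classes. It satisfies $r(t+1)=r(t)^2\mu_a/\mu_b$, or the analogous expression with the $C$ shift.

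The plurality class maximizes both $n_a$ and $\mu_a$, since $(n-1-\alpha)/n$ is increasing in $n$. So if $a$ is the unique largest class, then $r(0)>1$ and $\mu_a/\mu_b\ge 1$. By induction $r(t)$ increases, and the mass of every other class tends to $0$ relative to $s_a$. The trajectory therefore converges to the barycenter of $V_a$, which is a fixed point whose support is $V_a$ by Lemma~\ref{lem:ascent}. The class with the largest equilibrium mass is the one maximizing $n_a\mu_a=n_a-1-\alpha$, which is strictly increasing in $n_a$, so $y^*$ in \eqref{eq:readouts} is the plurality answer.

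I expect the main obstacle to be handling the shifted matrix $W'$ rather than $W$ inside the dynamics. Lemma~\ref{lem:shift} only guarantees shared fixed points, not identical trajectories, so the monotone-ratio argument has to be checked directly with $W'$. My first attempt would be to note that $(W'x)_i=(Wx)_i+C$ is still larger for nodes of the plurality class whenever $s_a\mu_a>s_b\mu_b$. That keeps $r(t)$ nondecreasing and in fact strictly increasing, and the invariant $s_a\mu_a>s_b\mu_b$ is preserved along the trajectory.

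A side issue is classes with $n_a\le 1+\alpha$, which have $\mu_a\le 0$. These lose mass immediately, consistent with Lemma~\ref{lem:singleton}. If all classes are singletons, the statement is vacuous because the largest class is not unique.
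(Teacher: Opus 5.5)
Your argument is correct. It shares the paper's first move: within-class symmetry keeps each class uniform, so only the masses $s_a$ matter, with $\mu_a=(n_a-1-\alpha)/n_a$. The core mechanism is different, though.

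The paper passes to the continuous-time mass equation on the unshifted $W$. It notes that $z_a=s_a\mu_a$ satisfies $\dot z_a=z_a\bigl(z_a-\sum_b s_b^2\mu_b\bigr)$, one scalar ODE shared by all classes. So the ranking of the extensive scores $s_a\mu_a$ can never change, and the largest class, which leads at $t=0$, leads forever.

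You stay with the discrete shifted iteration that is actually run. You track the pairwise ratio via $r(t+1)=r(t)\,(s_a\mu_a+C)/(s_b\mu_b+C)$, and you use the extra fact that $1-(1+\alpha)/n$ is increasing in $n$. The plurality therefore dominates in both mass and intensive score: $r>1$ gives $s_a\mu_a>s_b\mu_b$, which makes $r$ grow.

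Your version buys fidelity to the implementation. Lemma~\ref{lem:shift} only guarantees that $W$ and $W'$ share fixed points, not that the discrete shifted trajectory from the barycenter reaches the one the continuous flow reaches. You also check well-posedness despite the negative diagonal ($s_b\mu_b+C\ge 0$ for $C\ge\alpha$). The paper's ranking identity buys generality: it never needs $\mu_a\ge\mu_b$, so it carries over unchanged to the two-clique competition of Proposition~\ref{prop:separation}.

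One step you should tighten (the paper is equally terse here): $r(t)$ increasing does not by itself force the other masses to vanish. Since $s_a\mu_a$ is the largest class payoff, it is at least the mean $\sum_b s_b^2\mu_b$, so $s_a$ is nondecreasing. Hence $s_a\mu_a-s_b\mu_b=s_a(\mu_a-\mu_b/r)\ge s_a(0)\bigl(\mu_a-\mu_b^{+}/r(0)\bigr)>0$, while $s_b\mu_b+C\le\mu_b^{+}+C$. So $r$ grows at least geometrically, and every non-plurality mass tends to $0$.
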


\begin{proposition}[Extraction threshold]\label{prop:separation} Let a correct clique of size $n^{*}$ and affinity $c^{*}$ compete with a wrong clique of size $m > n^{*}$ and affinity $c_w$ ($\alpha{=}0$). The dynamics started from the barycenter extract the correct class iff
\begin{equation}
c^{*} \;>\; \frac{m-1}{\,n^{*}-1\,}\; c_w .
\label{eq:threshold}
\end{equation}
\end{proposition}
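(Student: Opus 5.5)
The plan is to exploit symmetry so that the replicator dynamics~\eqref{eq:replicator} reduce to a one-dimensional recursion on the mass of the correct class, and then to solve that recursion in closed form.

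\textbf{Setup and choice of matrix.} With $\alpha=0$ the matrix $A'$ is already nonnegative. I therefore run the dynamics directly on the two-block matrix whose off-diagonal blocks are zero, as the construction prescribes. The correct block has off-diagonal entries $c^{*}$ and the wrong block has off-diagonal entries $c_w$, with zero diagonals. I take this reading because Lemma~\ref{lem:shift} only guarantees that a shift preserves maximizers and fixed points, not the trajectory itself, so the trajectory must be computed on an unshifted matrix.

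\textbf{Step 1 (symmetry reduction).} The barycenter $x(0)=e/N$ is uniform within each class, and each block is invariant under permutations of its members. An induction on $t$ then shows that $x(t)$ stays uniform within each class: all members of the correct class share mass $s_t/n^{*}$, and all members of the wrong class share mass $(1-s_t)/m$.

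For a correct node, $(A'x)_i = c^{*}(n^{*}-1)\,s_t/n^{*} = \mu^{*} s_t$, where $\mu^{*}=c^{*}(1-1/n^{*})$ is exactly the $\mu_a$ of the preceding discussion. Likewise, a wrong node has $(A'x)_i=\mu_w(1-s_t)$ with $\mu_w=c_w(1-1/m)$. The quadratic form is $x^{\top}A'x=\mu^{*}s_t^2+\mu_w(1-s_t)^2$. Summing the update over each class gives
\[
s_{t+1}=\frac{\mu^{*}s_t^2}{\mu^{*}s_t^2+\mu_w(1-s_t)^2}.
\]

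\textbf{Step 2 (closed form).} Pass to the odds ratio $r_t=s_t/(1-s_t)$. The recursion becomes $r_{t+1}=(\mu^{*}/\mu_w)\,r_t^2$. Setting $u_t=(\mu^{*}/\mu_w)\,r_t$ gives $u_{t+1}=u_t^2$, so $u_t=u_0^{2^t}$. Hence:
\begin{itemize}
\item if $u_0>1$, then $s_t\to1$ and all mass goes to the correct class;
\item if $u_0<1$, then $s_t\to0$ and the wrong class is extracted;
\item if $u_0=1$, the dynamics stay at the unstable interior fixed point $\bar s=\mu_w/(\mu^{*}+\mu_w)$ and neither class is extracted.
\end{itemize}
The tie case is what makes the inequality in the statement strict.

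\textbf{Step 3 (read off the threshold).} The initial mass is $s_0=n^{*}/(n^{*}+m)$, so $r_0=n^{*}/m$. The condition $u_0>1$ is therefore $n^{*}\mu^{*}>m\mu_w$. Substituting the values of $\mu^{*}$ and $\mu_w$ turns this into $c^{*}(n^{*}-1)>c_w(m-1)$, which is~\eqref{eq:threshold}. This also confirms the extensive-score intuition $n_a\mu_a$ stated before Proposition~\ref{prop:majority}.

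\textbf{Main obstacle.} The algebra is routine. The delicate point is justifying that the dynamics run on an unshifted block matrix. If the shift $W'=A'+Cee^{\top}$ with cross-block entries $C>0$ were used, the trajectory would change: the class-mass recursion would acquire additive $C$ terms and the threshold would shift. So I would state explicitly that, as the construction specifies, cross-class blocks remain zero and no shift is needed at $\alpha=0$.

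A secondary check is that the within-class uniformity argument requires exact cliques. Those are what the hypothesis ``clique of affinity $c$'' provides, so no convergence of $\varphi_a$ to $\mu_a$ needs to be invoked; the identity holds at every step.
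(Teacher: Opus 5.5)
Your proof is correct and is essentially the paper's argument. The paper reduces to the same one-dimensional mass competition, leading with the continuous flow $\dot s = s(1-s)[\mu_{*}s-\mu_w(1-s)]$, and then observes that the discrete update gives exactly your odds recursion $\rho(t+1)=(\mu_{*}/\mu_w)\rho(t)^2$ with unstable point $\rho=\mu_w/\mu_{*}$; you have solved it in closed form via $u_t=u_0^{2^t}$.

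One side remark in your ``main obstacle'' paragraph is wrong, however: with the shift $C\,ee^{\top}$ the odds map becomes $r_{t+1}=r_t\,(\mu^{*}s_t+C)/(\mu_w(1-s_t)+C)$. This map is still increasing in $r_t$ and has the same unique interior fixed point $r=\mu_w/\mu^{*}$, so the trajectory changes but the threshold does not (consistent with the paper's numerical check including the shift), and working with the unshifted matrix is a convenience rather than a necessity.
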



\begin{proposition}[Confidence multiplies coherence]\label{prop:confidence} For an answer clique $S$ with base affinity $c$ and confidence weights $\{w_i\}$, the calibrated internal score obeys the exact identity
\begin{equation}
\mu'_S \;=\; c\bigl(\bar\mu_S^{\,2} - \bar\nu_S/|S|\bigr),
\qquad
\bar\mu_S = \operatorname{mean}_i \sqrt{w_i},\;\;
\bar\nu_S = \operatorname{mean}_i w_i,
\label{eq:identity}
\end{equation} so with $\bar w_S := \bar\mu_S^{\,2}$, the extraction threshold \eqref{eq:threshold} becomes
\begin{equation}
c^{*} \;>\; \frac{\bar w_w}{\bar w_{*}}\cdot\frac{m-1}{\,n^{*}-1\,}\; c_w .
\label{eq:calibthreshold}
\end{equation}
When correct roll-outs are more confident ($\bar w_{*} > \bar w_w$), the threshold drops multiplicatively.
\end{proposition}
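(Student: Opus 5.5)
We need to prove Proposition~\ref{prop:confidence}. The statement has two parts: the exact identity \eqref{eq:identity} and the rescaled threshold \eqref{eq:calibthreshold}.

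\textbf{The identity.} I would first fix what $\mu'_S$ means, consistently with the uniform-clique normalization $\mu_a = c(1-1/n_a)$ in \S\ref{sec:theory-sub}. That normalization is exactly the value of the within-class quadratic form at the class barycenter, $y=e_S/|S|$: indeed $y^\top A_S y = c(n^2-n)/n^2$. So I define $\mu'_S := y^\top A'_S y$ with $y$ uniform on $S$, where $A'_S$ is the calibrated block. Because $A_{ij}=c$ off the diagonal and $A_{ii}=0$, we have $A'_{ij} = c\sqrt{w_iw_j}$ for $i\neq j$. Hence
\[
\mu'_S = \frac{c}{|S|^2}\Bigl[\Bigl(\textstyle\sum_i \sqrt{w_i}\Bigr)^2 - \textstyle\sum_i w_i\Bigr],
\]
and dividing through gives $c\bigl(\bar\mu_S^{2} - \bar\nu_S/|S|\bigr)$. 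This is a routine expansion of the rank-one matrix $\sqrt{w}\sqrt{w}^\top$ with its diagonal removed. I would state explicitly that, under this barycentric reading, the identity is exact. In the uniform case $w\equiv e$ it reduces to $c(1-1/n)$, which confirms the normalization.

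\textbf{The threshold.} I would redo the argument of Proposition~\ref{prop:separation} with calibrated scores. That argument compares the extensive scores $n_a\mu_a$ seen from the barycenter start: $n^*c^*(1-1/n^*) = c^*(n^*-1)$ against $c_w(m-1)$. With calibration, the extensive score becomes $n\mu'_S = c\bigl(n\bar w_S - \bar\nu_S\bigr) = c\,\bar w_S\bigl(n - \bar\nu_S/\bar w_S\bigr)$. By Jensen, $\bar\nu_S/\bar w_S \ge 1$, with equality exactly when the weights are constant. To reach the clean form \eqref{eq:calibthreshold}, I would adopt the approximation $\bar\nu_S\approx \bar w_S$, i.e.\ confidences roughly homogeneous within a class. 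This gives extensive score $c\,\bar w_S (n-1)$, and comparing the two classes yields $c^*\bar w_*(n^*-1) > c_w\bar w_w(m-1)$, which rearranges to \eqref{eq:calibthreshold}. The multiplicative drop when $\bar w_*>\bar w_w$ is then immediate.

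\textbf{Main obstacle.} The delicate step is justifying the transfer from Proposition~\ref{prop:separation}. Two things must be established. First, that the barycenter-start selection criterion still compares the same extensive quantity once $A'$ is no longer uniform within a block; the within-class dynamics no longer stay at the class barycenter. Second, that the within-class dispersion correction $\bar\nu_S/\bar w_S - 1$ is negligible or conservative.

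For the first point, I would argue at the level of the first replicator step from $x(0)=e/N$. The class mass update is proportional to $s_a\cdot(\text{average payoff in class})$, and that average payoff is $n_a\mu'_a/N$. So the class that wins the early competition is decided by $n_a\mu'_a$, mirroring the reasoning used for Proposition~\ref{prop:separation}.

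For the second point, I would state that it holds exactly for class-constant confidences. In general, it yields \eqref{eq:calibthreshold} with $n-1$ replaced by $n-\bar\nu_S/\bar w_S$. I would note this as the precise form, so the qualitative conclusion is unaffected.
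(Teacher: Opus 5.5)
Your proposal is correct and matches the paper's proof essentially step for step. The identity comes from the uniform vector on $S$ by expanding $\bigl(\sum_i\sqrt{w_i}\bigr)^2$ and removing the diagonal. The threshold \eqref{eq:calibthreshold} comes from substituting $\mu'_S \approx \bar w_S\,\mu_S$ into the condition $n^{*}\mu'_{*} > m\,\mu'_w$ (this substitution is exactly your $\bar\nu_S \approx \bar w_S$), with the same Jensen remark that the exact score lies below this first-order form.

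Your first-step replicator argument and the refined form with $n-\bar\nu_S/\bar w_S$ are modest additions the paper does not make. The first-step argument only fixes the initial direction of mass flow, because the within-class conditionals drift when the weights are nonuniform. So it stays at the same heuristic level as the paper's $\approx$; it does not prove eventual extraction.
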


Confidence is not merely a refinement but the enabling signal. Under temperature-$\tau$ weighting, a confidence gap $\delta=\bar c_{*}-\bar c_{w}$ in mean token log-probability reduces the recovery threshold by $e^{-\delta/\tau}$. Since $c_i$ uses natural logarithms, $\delta$ is measured in nats. At the gap observed in \S\ref{sec:analysis}, the $8$-vs-$24$ threshold falls from ${\approx}3.3,c_w$ to below $c_w$, turning minority recovery from unlikely to feasible. Confidence-weighted voting alone only rescales counts and cannot compare class-level reasoning coherence. The analysis therefore predicts that neither confidence nor graph structure is sufficient in isolation; their multiplicative interaction in \eqref{eq:calibthreshold} creates the recovery regime.
\paragraph{Predictions.}
The analysis makes falsifiable forecasts that \S\ref{sec:experiments} tests: (i) at $\kappa{=}1, w{\equiv}e$, \emph{CoRE}'s pseudo-labels match majority voting exactly (Prop.~\ref{prop:majority}); (ii) the graph alone and confidence-weighted voting alone should each perform on par with the vote, while their combination should not (Props.~\ref{prop:separation}--\ref{prop:confidence}); (iii) gains should concentrate where a coherent, confident correct minority exists to be recovered, vanish where the vote is already right or no correct roll-outs exist, and reverse where the confidence premise fails.
\begin{table*}[t]
\centering
\small
\setlength{\tabcolsep}{4pt}
\newcommand{\stdev}[1]{{\scriptsize$\pm$#1}}
\resizebox{\linewidth}{!}{%
\begin{tabular}{ll c cccc cccc}
\toprule
Model & Bench & Base & Majority & EC & CISC & CoRE & $\Delta_{\text{Maj}}$ & $\Delta_{\text{EC}}$ & $\Delta_{\text{CISC}}$ & $\Delta_{\text{CoRE}}$ \\
& & (no RL) & \emph{(TTRL)} & (graph) & (conf.) & (ours) & \multicolumn{4}{c}{arm $-$ base} \\
\midrule
Qwen2.5-Math-1.5B & AMC & 26.7 & 47.3\stdev{0.8} & 45.1\stdev{2.3} & 45.4\stdev{3.0} & 48.9\stdev{0.9} & $+20.7$ & $+18.4$ & $+18.7$ & $\mathbf{+22.3}$ \\
 & MATH-500 & 31.0 & 71.7\stdev{0.7} & 73.5\stdev{0.3} & 73.4\stdev{0.6} & 77.0\stdev{1.5} & $+40.7$ & $+42.5$ & $+42.4$ & $\mathbf{+46.0}$ \\
 & \quad -MATH-L4  & 31.5 & 60.9\stdev{2.6} & 61.5\stdev{2.0} & 62.8\stdev{1.4} & 65.9\stdev{2.0} & $+29.4$ & $+30.0$ & $+31.3$ & $\mathbf{+34.4}$ \\
 & \quad -MATH-L5 & 26.3 & 43.4\stdev{1.9} & 40.2\stdev{6.2} & 42.1\stdev{0.8} & 43.3\stdev{2.1} & $\mathbf{+17.1}$ & $+13.9$ & $+15.9$ & $+17.0$ \\
 & AIME$^{\P}$ & 5.6 & 20.0\stdev{5.5} & 18.9\stdev{4.2} & 17.8\stdev{1.6} & 18.9\stdev{4.2} & $\mathbf{+14.4}$ & $+13.3$ & $+12.2$ & $+13.3$ \\
 & GPQA$^{*}$ & 17.6 & 27.6\stdev{0.2} & 22.5\stdev{1.3} & 19.2\stdev{1.9} & 23.6\stdev{3.1} & $\mathbf{+10.0}$ & $+4.9$ & $+1.6$ & $+6.0$ \\
\addlinespace[2pt]
Qwen2.5-Math-7B & AMC & 39.9 & 67.5\stdev{1.0} & 67.1\stdev{0.6} & 66.7\stdev{2.0} & 67.5\stdev{0.0} & $\mathbf{+27.6}$ & $+27.2$ & $+26.8$ & $+27.6$ \\
 & MATH-500 & 47.9 & 85.0\stdev{0.2} & 84.3\stdev{0.8} & 85.6\stdev{0.2} & 84.7\stdev{0.4} & $+37.1$ & $+36.4$ & $\mathbf{+37.7}$ & $+36.8$ \\
 & \quad -MATH-L4 & 46.3 & 78.6\stdev{2.9} & 77.1\stdev{0.7} & 78.6\stdev{0.8} & 81.5\stdev{2.1} & $+32.3$ & $+30.8$ & $+32.3$ & $\mathbf{+35.2}$ \\
 & \quad -MATH-L5 & 36.1 & 59.2\stdev{1.5} & 61.0\stdev{1.2} & 62.1\stdev{0.3} & 61.2\stdev{0.8} & $+23.2$ & $+24.9$ & $\mathbf{+26.0}$ & $+25.1$ \\
 & AIME & 6.7 & 41.1\stdev{1.6} & 40.0\stdev{0.0} & 41.1\stdev{1.6} & 42.2\stdev{3.2} & $+34.4$ & $+33.3$ & $+34.4$ & $\mathbf{+35.6}$ \\
 & GPQA & 23.5 & 21.2\stdev{0.3} & 29.8\stdev{1.4} & 36.2\stdev{3.0} & 28.7\stdev{1.1} & $-2.3$ & $+6.3$ & $\mathbf{+12.7}$ & $+5.2$ \\
\addlinespace[2pt]
DeepSeek-Math-7B & AMC & 9.6 & 19.1\stdev{1.7} & 20.7\stdev{0.4} & 19.5\stdev{2.7} & 22.9\stdev{1.2} & $+9.5$ & $+11.1$ & $+9.9$ & $\mathbf{+13.4}$ \\
 & MATH-500 & 18.7 & 51.7\stdev{0.5} & 51.7\stdev{0.4} & 51.1\stdev{1.6} & 51.6\stdev{1.0} & $\mathbf{+33.0}$ & $+32.9$ & $+32.4$ & $+32.8$ \\
 & \quad -MATH-L4 & 15.2 & 36.6\stdev{2.5} & 36.2\stdev{2.6} & 38.5\stdev{2.3} & 40.1\stdev{3.1} & $+21.4$ & $+21.0$ & $+23.2$ & $\mathbf{+24.9}$ \\
 & \quad -MATH-L5 & 8.3 & 16.5\stdev{1.4} & 14.6\stdev{1.3} & 15.9\stdev{2.3} & 17.6\stdev{1.3} & $+8.2$ & $+6.3$ & $+7.6$ & $\mathbf{+9.3}$ \\
 & AIME & 2.6 & 1.7\stdev{1.7} & 2.2\stdev{1.6} & 1.7\stdev{1.7} & 4.4\stdev{1.6} & $-1.0$ & $-0.4$ & $-1.0$ & $\mathbf{+1.8}$ \\
 & GPQA & 30.0 & 32.1\stdev{1.6} & 33.0\stdev{0.8} & 33.3\stdev{1.0} & 32.3\stdev{1.0} & $+2.1$ & $+3.0$ & $\mathbf{+3.3}$ & $+2.3$ \\
\midrule
\multicolumn{7}{r}{\textbf{Mean} $\Delta$ vs.\ Base} & $+19.9$ & $+19.8$ & $+20.4$ & $\mathbf{+21.6}$ \\
\bottomrule
\end{tabular}}
\caption{\textbf{Math-specialized models.} Final pass@1, mean$\pm$std over $3$ seeds (\%). Majority = TTRL ($\kappa{=}1$, Prop.~1). $\Delta$ columns are each arm minus the no-RL base; the largest per-row improvement over base is \textbf{bold}; $^{*}$out-of-domain (1.5B near random floor, see limitations); $^{\P}$within seed noise.}
\label{tab:cat-math}
\end{table*}

\begin{table*}[t]
\centering
\small
\setlength{\tabcolsep}{5pt}
\newcommand{\stdev}[1]{{\scriptsize$\pm$#1}}
\resizebox{\linewidth}{!}{%
\begin{tabular}{ll c cccc cccc}
\toprule
Model & Bench & Base & Majority & EC & CISC & CoRE & $\Delta_{\text{Maj}}$ & $\Delta_{\text{EC}}$ & $\Delta_{\text{CISC}}$ & $\Delta_{\text{CoRE}}$ \\
& & (no RL) & \emph{(TTRL)} & (graph) & (conf.) & (ours) & \multicolumn{4}{c}{arm $-$ base} \\
\midrule
Qwen2.5-7B & AMC & 21.6 & 54.2\stdev{1.0} & 52.6\stdev{2.3} & 56.6\stdev{1.0} & 55.4\stdev{0.0} & $+32.6$ & $+31.0$ & $\mathbf{+35.0}$ & $+33.8$ \\
 & MATH-500 & 60.5 & 80.5\stdev{1.1} & 80.4\stdev{1.2} & 80.5\stdev{1.0} & 79.7\stdev{0.5} & $\mathbf{+20.0}$ & $+19.9$ & $+20.0$ & $+19.2$ \\
 & \quad -MATH-L4 & 31.0 & 75.0\stdev{2.2} & 74.5\stdev{2.2} & 75.5\stdev{1.6} & 77.9\stdev{2.4} & $+44.0$ & $+43.5$ & $+44.5$ & $\mathbf{+46.9}$ \\
 & \quad -MATH-L5 & 17.5 & 50.7\stdev{1.1} & 53.5\stdev{1.0} & 51.2\stdev{1.8} & 56.9\stdev{0.9} & $+33.2$ & $+36.0$ & $+33.7$ & $\mathbf{+39.4}$ \\
 & AIME & 2.2 & 23.3\stdev{2.7} & 21.1\stdev{5.7} & 23.3\stdev{0.0} & 23.3\stdev{0.0} & $\mathbf{+21.1}$ & $+18.9$ & $+21.1$ & $+21.1$ \\
 & GPQA & 20.5 & 29.2\stdev{0.5} & 28.7\stdev{1.2} & 33.5\stdev{2.4} & 29.5\stdev{4.0} & $+8.7$ & $+8.2$ & $\mathbf{+13.0}$ & $+9.0$ \\
\addlinespace[2pt]
Qwen3-8B & AMC & 58.3 & 71.3\stdev{1.3} & 68.9\stdev{1.0} & 70.6\stdev{0.4} & 73.3\stdev{2.0} & $+13.1$ & $+10.6$ & $+12.3$ & $\mathbf{+15.0}$ \\
 & MATH-500 & 82.3 & 90.3\stdev{0.1} & 88.9\stdev{0.4} & 90.4\stdev{0.3} & 88.6\stdev{0.3} & $+8.0$ & $+6.6$ & $\mathbf{+8.1}$ & $+6.3$ \\
 & \quad -MATH-L4 & 57.7 & 86.5\stdev{2.2} & 84.7\stdev{2.2} & 86.3\stdev{1.6} & 88.8\stdev{2.4} & $+28.8$ & $+27.0$ & $+28.6$ & $\mathbf{+31.1}$ \\
 & \quad -MATH-L5 & 45.0 & 67.9\stdev{1.1} & 69.2\stdev{1.0} & 66.1\stdev{1.8} & 71.6\stdev{0.9} & $+22.9$ & $+24.2$ & $+21.1$ & $\mathbf{+26.6}$ \\
 & AIME & 24.3 & 46.7\stdev{2.7} & 45.1\stdev{1.4} & 42.9\stdev{1.9} & 42.8\stdev{1.9} & $\mathbf{+22.4}$ & $+20.9$ & $+18.6$ & $+18.5$ \\
 & GPQA & 10.9 & 42.0\stdev{4.8} & 36.7\stdev{0.9} & 38.8\stdev{0.8} & 37.4\stdev{1.1} & $\mathbf{+31.1}$ & $+25.9$ & $+27.9$ & $+26.5$ \\
\midrule
\multicolumn{7}{r}{\textbf{Mean} $\Delta$ vs.\ Base} & $+23.8$ & $+22.7$ & $+23.7$ & $\mathbf{+24.5}$ \\
\bottomrule
\end{tabular}}
\caption{\textbf{Vanilla (base) models.} Measured entries report final pass@1 as mean$\pm$std over 3 seeds. Qwen3-8B is evaluated in non-thinking mode.}
\label{tab:cat-vanilla}
\end{table*}
\section{Experimental Setup}
\label{sec:experiments}
\paragraph{Models and benchmarks.} We evaluate $7$ backbones from four developers across three regimes: math-specialized Qwen2.5-Math-1.5B, Qwen2.5-Math-7B~\cite{yang2024qwen2}, and DeepSeek-Math-7B~\cite{deepseek-math}; vanilla Qwen2.5-7B~\cite{yang2024qwen2} and Qwen3-8B in non-thinking mode~\cite{yang2025qwen3}; and Llama-3.1-8B-Instruct~\cite{grattafiori2024llama} and Mistral-Nemo-Instruct~\cite{jiang2023mistral} (Tables~\ref{tab:cat-math}--\ref{tab:cat-instruct}). Each model is adapted and evaluated on five free-form mathematical-reasoning benchmarks: AMC, AIME~2024, MATH-500~\cite{hendrycks2021measuring}, and its L4 and L5 subsets, which isolate harder, high-disagreement cases where correct minorities are more likely to be outvoted. We additionally use the 198-question GPQA-Diamond subset~\cite{rein2023gpqa} as an out-of-domain probe covering expert-validated, "Google-proof" graduate-level physics, chemistry, and biology questions with four answer choices. Math-specialized models are OOD in both subject and format and remain near the $25\%$ random-guess baseline, making GPQA-Diamond a probe of the operating envelope rather than in-domain ability (\S\ref{sec:analysis}). Appendix~\ref{app:prompts} provides all prompt templates.
\paragraph{Training}Following TTRL~\cite{zuo2026ttrl}, each model is adapted on the unlabeled test benchmark. For every question, we sample $N{=}64$ roll-outs, derive a label-free reward from their consensus, and update the policy with GRPO~\cite{deepseek-math}; ground-truth answers are used only for final evaluation. We compare four reward rules under the same training loop: Majority, the standard TTRL vote and the $\kappa{=}1$ special case of our operator (Prop.~\ref{prop:majority}); EC, equilibrium consensus with graded rewards but no confidence; CISC, a pre-registered confidence-weighted voting control without the graph~\cite{taubenfeld2025confidence}; and \emph{CoRE}, the full confidence-calibrated method. We report pass@1 as mean@16 using 16 samples at temperature~$0.6$ and top-$p$~$0.95$, averaged over three seeds for each model--benchmark pair. Base denotes the frozen model without RL, and each $\Delta$ reports the corresponding arm minus Base.
\paragraph{\emph{CoRE} configuration.} We use one fixed setting across all models, benchmarks, and seeds: $N{=}64$ following TTRL, $\kappa{=}0.1$, $\tau{=}0.25$, $\alpha{=}0.1$, and $C{=}\alpha$.\footnote{$\alpha$ is a small positive self-penalty (\S\ref{sec:method}); $\kappa{=}0.1$ emphasizes reasoning similarity, whereas $\kappa{=}1$ recovers majority voting (Prop.~\ref{prop:majority}); and $\tau{=}0.25$ provides mild confidence calibration.} Replicator dynamics starts from the barycenter and stops at $\epsilon{=}10^{-6}$ or $T_{\max}{=}200$, with support threshold $\varrho{=}1/(10N)$. Each question uses independently fitted character 4-gram TF--IDF reasoning vectors,\footnote{TF--IDF is deterministic and model-free, while lexical overlap captures shared derivation steps without an auxiliary encoder.} and consensus adds negligible $O(N^2)$ computation relative to roll-out generation.
\section{Results}
\paragraph{Math-specialized models:} Table~\ref{tab:cat-math} reports the math backbones. \textsc{CoRE} improves the no-RL base by $+25.0$ points on average and is the best arm on every model. Against the reward that TTRL actually uses Majority vote, \textsc{CoRE} wins on the benchmarks where agreement is contestable and stays within seed noise elsewhere. On Qwen2.5-Math-1.5B it improves over Majority by $+5.3$ on MATH-500 and $+5.0$ on MATH-L4; on Qwen2.5-Math-7B by $+7.5$ on GPQA; and on DeepSeek-Math-7B by $+3.8$ on AMC. The gains concentrate on the harder benchmarks, where a correct but non-majority cluster exists to be recovered. The only benchmarks on which \textsc{CoRE} trails Majority is MATH-L5 on Qwen2.5-Math-1.5B and MATH-500 on both Qwen2.5-Math-7B and DeepSeek-Math-7B do so by at most $0.3$ points, within the $0.2$--$0.6$ across-seed standard deviation, and on models where Majority already scores above $85\%$ and $50\%$ so little recoverable headroom remains Figure~\ref{fig:offline} compares offline pseudo-label accuracy before policy updates. Across both Qwen2.5-Math models, \textsc{CoRE} outperforms majority voting at every roll-out count $N$, and the gap persists as $N$ increases. The downstream gains therefore arise from a better training signal rather than optimization noise.

\paragraph{Vanilla and instruct models:} The improvement is not specific to math-specialized backbones. Table~\ref{tab:cat-vanilla} reports the vanilla base models: on Qwen2.5-7B, \textsc{CoRE} beats Majority by $+6.2$ on MATH-L5, $+2.9$ on MATH-L4, and $+1.2$ on AMC (mean $\Delta$ vs base $+24.5$). Table~\ref{tab:cat-instruct} reports the instruct models: on LLaMA-3.1-8B, \textsc{CoRE} improves over Majority by $+4.2$ on MATH-L4 and $+5.1$ on GPQA, and on Mistral-Nemo by $+2.2$ on MATH-L5 and $+3.8$ on GPQA (mean $\Delta$ vs base $+19.1$). Across all three model families the picture is consistent: \textsc{CoRE} is the best RL arm on most benchmarks and improves over the no-RL base by roughly $+21.7$ points on average across all 42 cells (family means $+21.6$, $+24.5$, $+19.1$), spanning four model developers (Qwen, DeepSeek, Meta, Mistral) and three capability regimes.

\begin{table*}[t]
\centering
\small
\setlength{\tabcolsep}{5pt}
\newcommand{\stdev}[1]{{\scriptsize$\pm$#1}}
\resizebox{\linewidth}{!}{%
\begin{tabular}{ll c cccc cccc}
\toprule
Model & Bench & Base & Majority & EC & CISC & CoRE & $\Delta_{\text{Maj}}$ & $\Delta_{\text{EC}}$ & $\Delta_{\text{CISC}}$ & $\Delta_{\text{CoRE}}$ \\
& & (no RL) & \emph{(TTRL)} & (graph) & (conf.) & (ours) & \multicolumn{4}{c}{arm $-$ base} \\
\midrule
LLaMA-3.1-8B & AMC & 4.5 & 31.3\stdev{1.0} & 33.3\stdev{2.5} & 34.5\stdev{4.6} & 33.3\stdev{4.6} & $+26.8$ & $+28.8$ & $\mathbf{+30.1}$ & $+28.9$ \\
& MATH-500 & 48.6 & 65.5\stdev{1.0} & 55.5\stdev{0.9} & 55.7\stdev{2.8} & 65.2\stdev{2.0} & $\mathbf{+16.9}$ & $+6.9$ & $+7.1$ & $+16.6$ \\
& \quad -MATH-L4 & 5.9 & 57.3\stdev{2.1} & 56.8\stdev{2.4} & 59.7\stdev{1.3} & 61.5\stdev{0.9} & $+51.4$ & $+50.9$ & $+53.8$ & $\mathbf{+55.6}$ \\
& \quad -MATH-L5 & 4.8 & 27.2\stdev{0.9} & 27.0\stdev{1.4} & 27.2\stdev{1.1} & 28.2\stdev{2.4} & $+22.5$ & $+22.2$ & $+22.5$ & $\mathbf{+23.5}$ \\
& AIME & 1.1 & 7.8\stdev{4.2} & 12.2\stdev{3.2} & 17.8\stdev{1.6} & 10.0\stdev{2.7} & $+6.7$ & $+11.1$ & $\mathbf{+16.7}$ & $+8.9$ \\
& GPQA & 22.6 & 27.9\stdev{2.0} & 31.4\stdev{0.5} & 32.5\stdev{0.1} & 33.0\stdev{1.5} & $+5.3$ & $+8.8$ & $+9.9$ & $\mathbf{+10.4}$ \\
\addlinespace[2pt]
Mistral-Nemo-Instruct & AMC & 14.0 & 25.7\stdev{1.1} & 20.1\stdev{2.3} & 25.3\stdev{2.6} & 25.3\stdev{1.0} & $\mathbf{+11.7}$ & $+6.1$ & $+11.3$ & $+11.3$ \\
& MATH-500 & 40.8 & 62.5\stdev{1.0} & 49.9\stdev{0.9} & 51.1\stdev{2.8} & 60.2\stdev{2.0} & $\mathbf{+21.7}$ & $+9.1$ & $+10.3$ & $+19.4$ \\
& \quad -MATH-L4 & 36.1 & 54.4\stdev{1.4} & 52.6\stdev{1.0} & 54.7\stdev{0.7} & 55.7\stdev{0.3} & $+18.3$ & $+16.5$ & $+18.6$ & $\mathbf{+19.6}$ \\
& \quad -MATH-L5 & 13.3 & 21.1\stdev{0.5} & 19.9\stdev{2.2} & 21.7\stdev{1.4} & 23.3\stdev{0.9} & $+7.8$ & $+6.6$ & $+8.5$ & $\mathbf{+10.0}$ \\
& AIME$^{\S}$ & 0.7 & 2.2\stdev{1.6} & 1.1\stdev{1.6} & 0.0\stdev{0.0} & 0.0\stdev{0.0} & $\mathbf{+1.5}$ & $+0.4$ & $-0.7$ & $-0.7$ \\
& GPQA & 3.3 & 24.6\stdev{2.6} & 24.5\stdev{2.7} & 26.9\stdev{4.5} & 28.4\stdev{3.8} & $+21.2$ & $+21.2$ & $+23.6$ & $\mathbf{+25.1}$ \\
\midrule
\multicolumn{7}{r}{\textbf{Mean} $\Delta$ vs.\ Base} & $+17.7$ & $+15.7$ & $+17.6$ & $\mathbf{+19.1}$ \\
\bottomrule
\end{tabular}}
\caption{\textbf{Instruct \& reasoning models.} Measured entries report final pass@1 as mean$\pm$std over 3 seeds. $^{\S}$ denotes a competence-floor setting: Mistral-Nemo cannot solve competition-level AIME, and TTRL~\cite{zuo2026ttrl} also reports $0$.}
\label{tab:cat-instruct}
\end{table*}
\begin{figure}[t]
    \centering
    \includegraphics[width=0.7\linewidth]{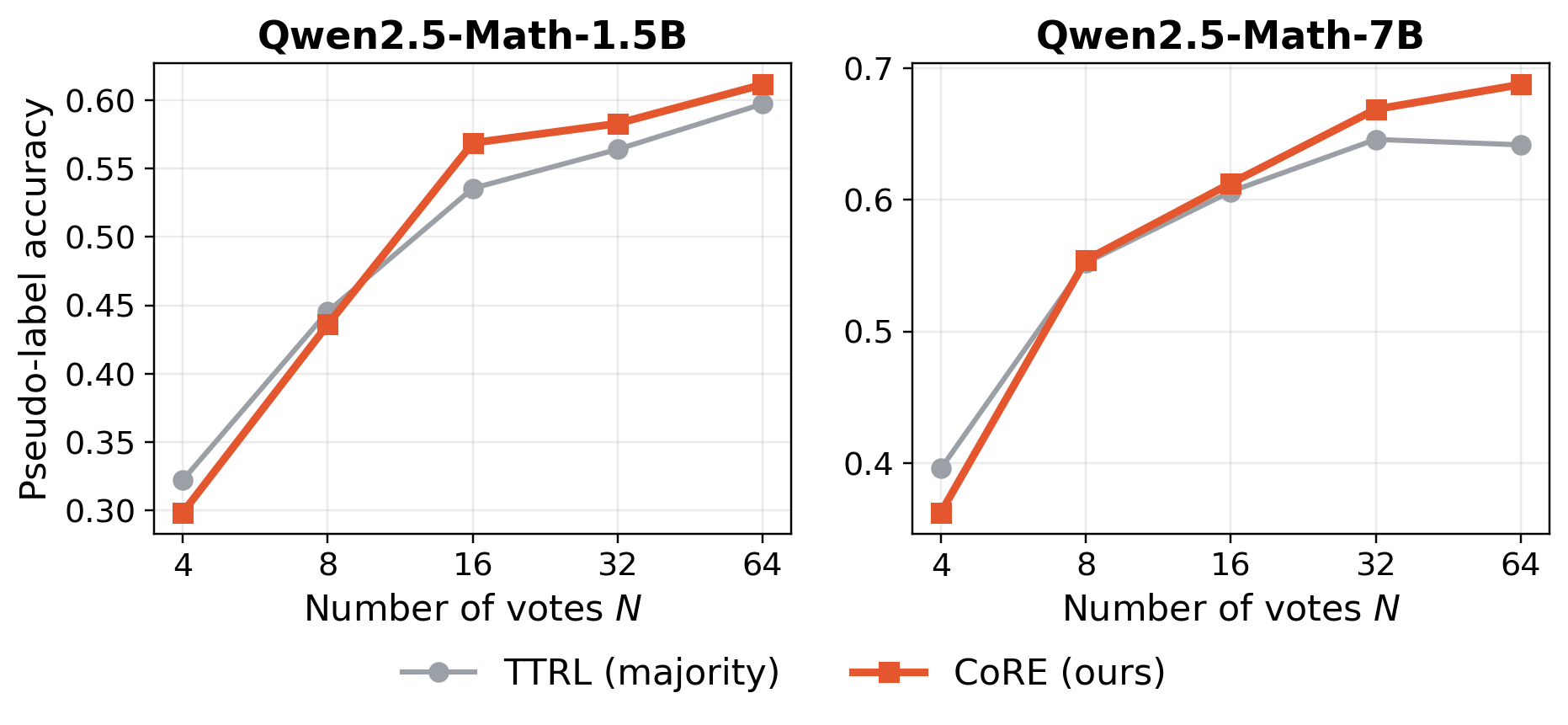}
    \caption{Offline pseudo-label accuracy versus roll-out count $N$. \textsc{CoRE} outperforms majority voting at every $N$, indicating a better training signal rather than optimization noise.}
    \label{fig:offline}
\end{figure}
\paragraph{Sample efficiency:} Figure~\ref{fig:efficiency} compares the validation trajectories of \textsc{CoRE} and TTRL on three model--benchmark pairs. \textsc{CoRE} reaches TTRL's plateau accuracy in $70\%$ fewer steps on LLaMA-3.1-8B (AMC), $57\%$ fewer on DeepSeek-Math-7B (AMC), and $54\%$ fewer on Qwen2.5-Math-1.5B (MATH). Unlike the binary majority reward, which assigns the same target to every consensus member, the equilibrium reward $R_i=(Ax^\ast)_i/\max_j(Ax^\ast)_j$ grades each roll-out by its support within the consensus. This denser signal appears to reduce gradient variance, linking faster convergence to the final accuracy gains.

\begin{figure}[t]
    \centering
    \includegraphics[width=\linewidth]{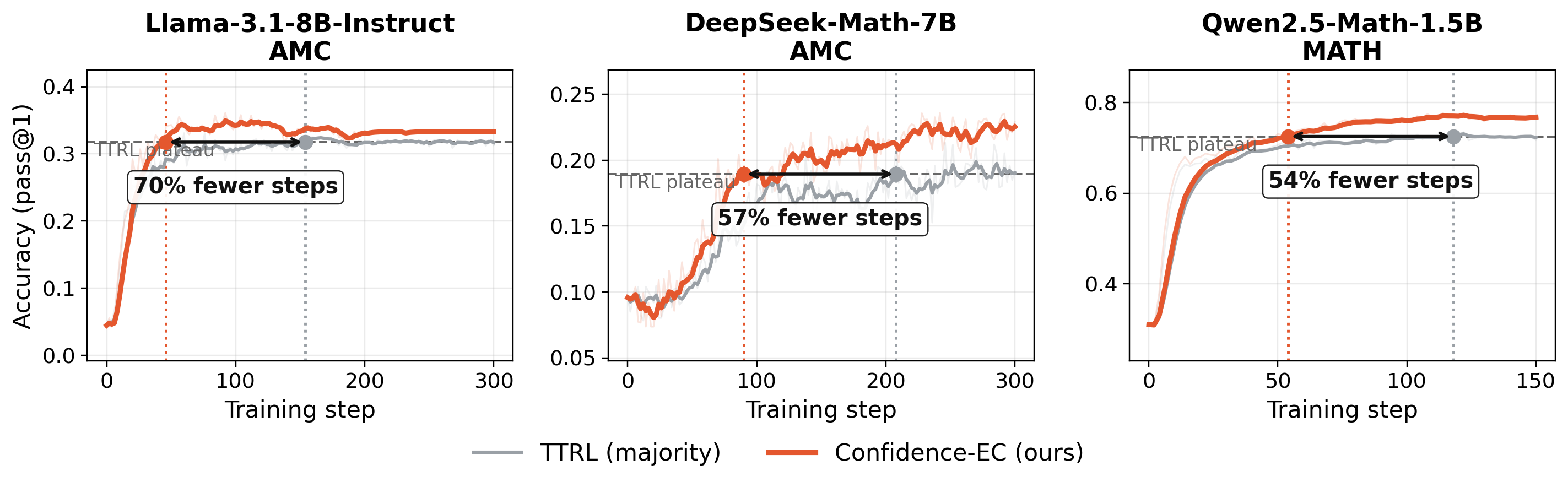}
    \caption{Validation pass@1 vs.\ training step (means over 3 seeds). \emph{CoRE} reaches TTRL's plateau accuracy in 54--70\% fewer steps.}
    \label{fig:efficiency}
\end{figure}

\begin{figure}[t]
    \centering
    \includegraphics[width=\linewidth]{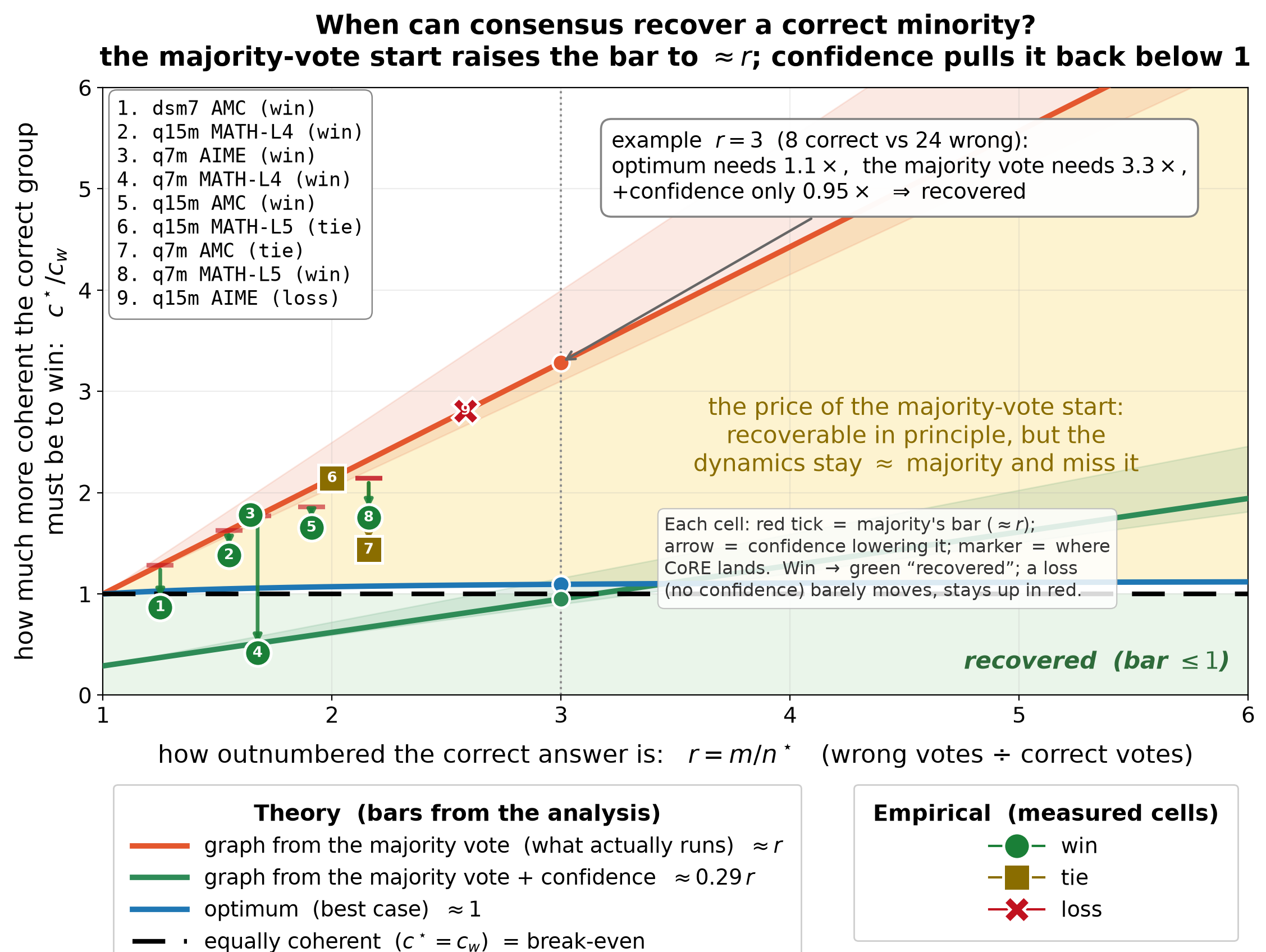}
    \caption{The recovery region: outnumbering ratio $r=m/n^{*}$ vs.\ required coherence advantage $c^{*}/c_w$. The majority-vote start raises the bar to ${\approx}r$ (red); confidence pulls it back by $e^{-\delta/\tau}$ (green). Measured cells below their bar are recovered wins; those above are losses.}
    \label{fig:recovery}
\end{figure}
\begin{figure}[t]
    \centering
    \includegraphics[width=0.9\linewidth]{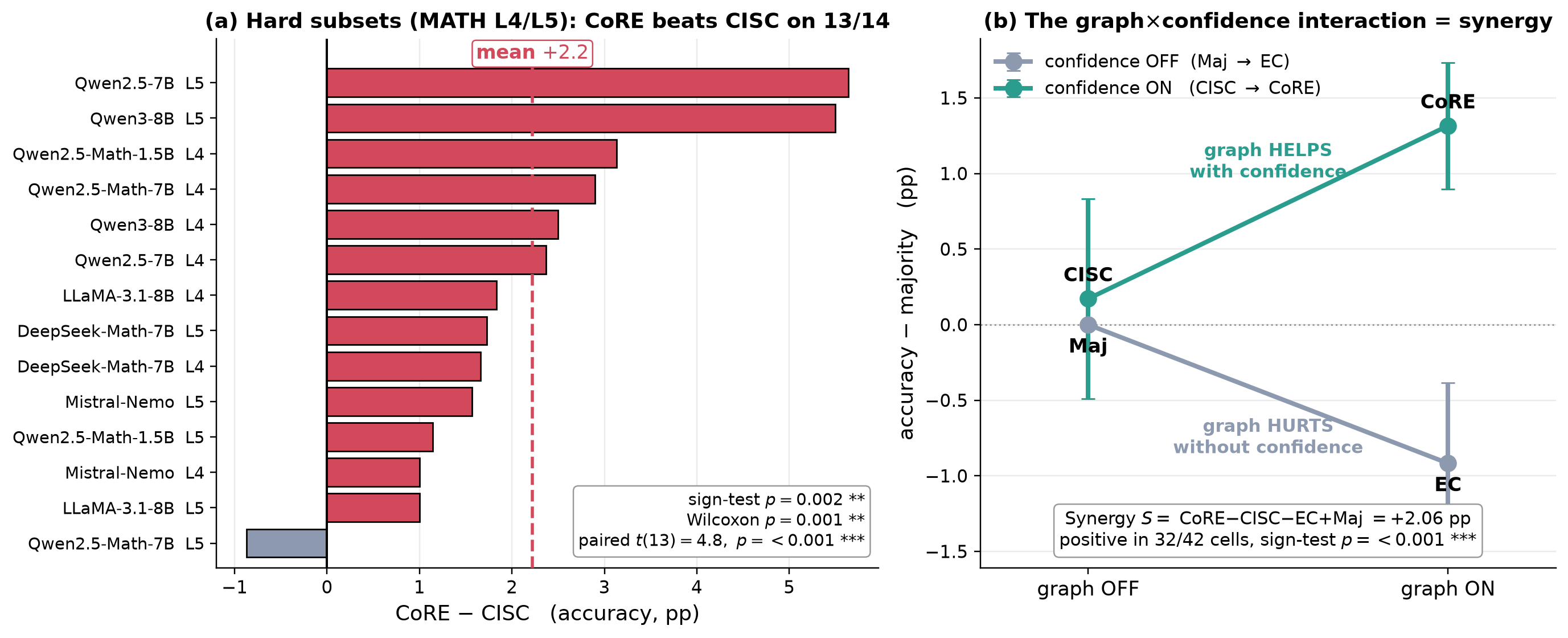}
    \caption{(a) On MATH L4/L5, \emph{CoRE} beats the confidence-only control CISC on 13/14 cells. (b) Graph and confidence interact synergistically: adding the graph to CISC helps, removing confidence from CoRE hurts.}
    \label{fig:synergy}
\end{figure}
\section{Analysis}
\label{sec:analysis}
\paragraph{\textsc{CoRE} vs.\ CISC.} CISC retains confidence weighting but removes the graph, equilibrium, and graded reward, so the $\textsc{CoRE}-\text{CISC}$ gap isolates the value of second-order structure. \textsc{CoRE} improves over CISC by $+1.2$ points on average, with family-level gains of $+1.2$, $+0.8$, and $+1.5$. On the contested MATH L4/L5 subsets, it wins 13 of 14 settings, with significance under both the sign test ($p{=}0.002$) and Wilcoxon signed-rank test ($p{=}0.001$) (Fig.~\ref{fig:synergy}a). Unlike confidence voting, which weights roll-outs independently, the quadratic objective $x^\top A x$ captures mutual support and can recover a coherent correct minority.
\paragraph{Recovery region.} Propositions~\ref{prop:separation}--\ref{prop:confidence} predict when a correct minority can be recovered, as visualized in Figure~\ref{fig:recovery}. The axes measure its size disadvantage, $r=m/n^\ast$, and required coherence advantage, $c^\ast/c_w$. The static optimum requires roughly equal coherence, whereas barycenter-initialized replicator dynamics raise the threshold to $\approx r$ because they begin from majority mass. Confidence calibration lowers it by $e^{-\delta/\tau}$, yielding $\approx 0.29r$ here. Model--benchmark pairs below this operative threshold are recoverable; those above it are not.
\paragraph{Operating regimes of consensus}The exceptions in Tables~\ref{tab:cat-math}--\ref{tab:cat-instruct} occur mainly at the \emph{competence floor}, where roll-outs lack coherent structure ($\chi \approx 0$) and no reliable minority cluster exists beyond the majority signal. This regime appears for Qwen2.5-Math-1.5B on out-of-domain GPQA and Mistral-Nemo on AIME, where all consensus methods remain near the underlying performance floor.At the opposite extreme, \emph{saturation} leaves little room for additional recovery. When a strong model produces nearly unanimous roll-outs, the majority ratio approaches one and the consensus rules become equivalent up to seed variation. The results on Qwen3-8B for GPQA and AMC are consistent with the $\kappa=1$ limit in Proposition~\ref{prop:majority}, under which \textsc{CoRE} reduces to Majority. Thus, the largest gains arise in the intermediate regime where the sample set contains a coherent correct minority that is not selected by majority voting alone.

The analysis also characterizes when confidence information is useful. Proposition~\ref{prop:confidence} assumes that correct clusters are, on average, assigned higher confidence than incorrect clusters. When this ordering is weak or reversed, confidence provides limited additional evidence. This motivates the combined use of confidence and graph structure in \textsc{CoRE}: each signal is most informative in a different part of the operating envelope, while the method naturally approaches Majority when no additional recoverable consensus is present.
\section{Limitations}
\label{sec:limitations}
\textsc{CoRE} helps only where roll-outs disagree and a coherent correct cluster exists (\S\ref{sec:analysis}): it cannot manufacture signal at the competence floor and reduces to majority under saturation. The reasoning kernel uses a
black-box lexical representation, white-box hidden states are a natural extension and we defer long-context ($32$k) long-reasoning-model training to future work (Appendix~\ref{app:prompts}).
\section{Conclusion}
We introduced \emph{CoRE}, which replaces majority voting in test-time reinforcement learning with the dominant-set equilibrium of a confidence-calibrated roll-out graph. Using the same $N$ roll-outs, \emph{CoRE} produces a refined pseudo-label, graded rewards, and a question-level cohesiveness gate, while retaining majority voting as a special case. Our analysis identifies when a coherent correct minority can overturn an incorrect majority and shows that confidence lowers this recovery threshold multiplicatively. Across seven backbones and five benchmarks, \emph{CoRE} improves over the untrained base by $+21.7$ points on average, compared with $+20.4$ for majority-vote TTRL, exceeds voting by up to $+7.5$ points, and reaches its final accuracy in 54--70\% fewer steps. The gains concentrate where agreement is contested, while the failures occur near the predicted competence floor. These findings suggest that the consensus operator is a substantive component of self-supervised RL: modeling mutual support among roll-outs yields a denser training signal than treating them as independent votes. Future work may replace lexical reasoning features with hidden-state representations and extend \emph{CoRE} to long-context reasoning models.
\section*{Acknowledgements}
This work was supported by the European Union's Horizon Europe research and
innovation programme under the Marie Sk\l{}odowska-Curie grant agreement
No.~101205348 (CASPER). We acknowledge the EuroHPC Joint Undertaking for
awarding this project access to the EuroHPC supercomputer LEONARDO, hosted by
CINECA (Italy) and the LEONARDO consortium, through the EuroHPC AI Factories
"AI for Science and Collaborative EU Projects" Access call (proposal
No.~EHPC-AIF-2026SC01-041). We further acknowledge the CINECA award under the
ISCRA initiative (Class C project IsCd5\_CASPER-A), for the availability of
high performance computing resources and support. Views and opinions expressed
are however those of the author(s) only and do not necessarily reflect those of
the European Union or the European Research Executive Agency. Neither the
European Union nor the granting authority can be held responsible for them.
\bibliographystyle{plainnat}
\bibliography{references}

\clearpage
\appendix
\section{Code and Reproducibility}
\label{app:code}
For reproducibility, we release the full codebase, including the roll-out graph construction, the replicator-dynamics consensus and read-outs, the GRPO training loop, and all evaluation scripts, in a
public repository:\\
\mbox{\url{TBA}}\\ The repository documents the environment and the configuration files (models, benchmarks, seeds, and the fixed hyperparameters of \S4) needed to reproduce every table and figure in this paper.

\section{Use of Large Language Models}
\label{app:llm}
Large language models were used only to improve grammar and clarity in author-written text.
\section{Well-Posedness of the Consensus Construction}
\begin{lemma}[Constant-shift invariance]\label{lem:shift}
For all $x$ on the simplex, $x^{\!\top}W'x = x^{\!\top}Wx + C$. Hence $W$ and $W'$ share all local and global maximizers and all replicator fixed points; the discrete dynamics on $W'$ differ from those on $W$ only in convergence rate, and continuous-time trajectories coincide exactly.
\end{lemma}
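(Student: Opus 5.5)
The plan is to reduce everything to the single identity $e^{\!\top}x = 1$ on the simplex $\Delta$. Since $W' = W + C\,ee^{\!\top}$, for any $x$ we have $x^{\!\top}W'x = x^{\!\top}Wx + C\,(e^{\!\top}x)^2$. On $\Delta$ this gives $x^{\!\top}W'x = x^{\!\top}Wx + C$.

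From this the optimization claims follow at once. The two quadratic objectives differ on $\Delta$ by a constant, so they have the same local maximizers, global maximizers and KKT points relative to $\Delta$. Any neighbourhood comparison $f(y)\le f(x)$ for $y\in\Delta$ near $x$ transfers verbatim.

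For the dynamics, I will use the componentwise version of the same identity. For $x\in\Delta$, $(W'x)_i = (Wx)_i + C\,e^{\!\top}x = (Wx)_i + C$, so the payoff differences agree:
\[
(W'x)_i - x^{\!\top}W'x = (Wx)_i - x^{\!\top}Wx .
\]
The continuous-time replicator field is $\dot x_i = x_i\bigl[(Wx)_i - x^{\!\top}Wx\bigr]$, and by the display above it is literally the same vector field for $W$ and $W'$. Hence, by uniqueness of solutions of this polynomial ODE, trajectories from the same initial point coincide exactly. For the discrete map \eqref{eq:replicator}, $x$ is a fixed point iff $x_i\bigl[(W'x)_i - x^{\!\top}W'x\bigr]=0$ for all $i$, provided the denominator is positive. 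Again by the display, this is the same condition as for $W$, so the fixed-point sets coincide.

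The remaining claim, that the discrete dynamics differ only in rate, I will make precise by rewriting the update as
\[
x_i(t+1) = x_i(t) + x_i(t)\,\frac{(Wx)_i - x^{\!\top}Wx}{x^{\!\top}Wx + C}.
\]
This is the same direction field as the $W$-dynamics rescaled by the positive step $1/(x^{\!\top}Wx + C)$. It is an Euler-type discretization of the common continuous flow, with a shift-dependent step size.

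The main obstacle is interpretive rather than technical. First, the discrete map on $W$ itself may be ill-defined, because $W$ has a negative diagonal and $x^{\!\top}Wx$ can be nonpositive. I will therefore state the rate comparison through the rewritten form above, and compare fixed points via the stationarity condition rather than the raw map. I will also note that positivity of $x^{\!\top}W'x$ needs $C\ge\alpha$, with strictness or a nonzero off-diagonal entry required to avoid a zero denominator.

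Second, the text says the off-diagonal blocks are kept at $0$, which conflicts with a full rank-one shift $C\,ee^{\!\top}$. The lemma as stated needs the full shift, since only $C(e^{\!\top}x)^2$ collapses to a constant. A block-restricted shift would instead give $C\sum_a s_a^2$, which is not constant. I will prove the lemma for $W' = W + C\,ee^{\!\top}$ and flag this point.
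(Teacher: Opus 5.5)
Your proposal is correct and follows the paper's proof essentially step for step. It uses the identity $x^{\top}(C\,ee^{\top})x = C(e^{\top}x)^2 = C$ on $\Delta$, the cancellation $(W'x)_i - x^{\top}W'x = (Wx)_i - x^{\top}Wx$ to get the same fixed points and identical continuous-time flows, and the reading of the discrete $W'$-update as the same direction with a step damped by $C$. Your extra care is well placed: you state the fixed-point condition as $x_i[(W'x)_i - x^{\top}W'x]=0$ rather than the KKT-type condition the paper invokes, you avoid the possibly ill-defined raw update on $W$, and you note that the lemma needs the full rank-one shift $C\,ee^{\top}$, which is also what the paper's proof uses (a block-restricted shift would add the non-constant term $C\sum_a s_a^2$).
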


\begin{lemma}[Monotone ascent]\label{lem:ascent}
For entrywise-nonnegative symmetric $W'$, update \eqref{eq:replicator} is a Baum--Eagon growth transform: $x^{\!\top}W'x$ is non-decreasing at every step, strictly increasing off fixed points, and the iterates converge to fixed points whose supports are dominant sets of $W'$~\cite{pavan2007dominant}.
\end{lemma}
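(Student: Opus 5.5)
The plan is to recognise \eqref{eq:replicator} as the growth transform of the Baum--Eagon inequality~\cite{baum1967inequality}. I would then read off monotonicity, and finally use a Lyapunov argument together with the dominant-set characterisation of~\cite{pavan2007dominant} for the limit points.

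\textbf{Step 1 (well-definedness).} Set $P(x)=x^{\!\top}W'x=\sum_{i,j}W'_{ij}x_ix_j$. This is a homogeneous polynomial of degree $2$ with nonnegative coefficients, since $W'$ is entrywise nonnegative. The denominator $x^{\!\top}W'x$ must stay positive along the orbit. This holds at the barycenter $x(0)=e/N$ whenever $W'\neq 0$. The update multiplies each coordinate by a nonnegative factor, so the iterates stay in $\Delta$ and supports can only shrink. If $P(x(t))>0$, the ascent step below gives $P(x(t+1))\ge P(x(t))>0$, and positivity propagates by induction.

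\textbf{Step 2 (growth transform).} Because $W'$ is symmetric, $\partial P/\partial x_i = 2(W'x)_i$. Euler's identity for homogeneous polynomials gives $\sum_j x_j\,\partial P/\partial x_j = 2P(x)$. Hence
\[
x_i\frac{\partial P/\partial x_i}{\sum_j x_j\,\partial P/\partial x_j}=x_i\frac{(W'x)_i}{x^{\!\top}W'x},
\]
which is exactly \eqref{eq:replicator}. The Baum--Eagon theorem then yields $P(x(t+1))\ge P(x(t))$, with equality only when $x(t+1)=x(t)$. So $P$ strictly increases off fixed points, which by Lemma~\ref{lem:shift} are the same for $W$ and $W'$.

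\textbf{Step 3 (convergence and supports).} $P$ is continuous and bounded on the compact set $\Delta$, so $P(x(t))$ converges. Since $P$ is a strict Lyapunov function for a continuous map, every $\omega$-limit point is a fixed point, i.e.\ a point where $(W'x)_i=x^{\!\top}W'x$ on the support. The main obstacle is upgrading "limit points are fixed points" to convergence of the whole sequence, because the fixed points need not be isolated. For this I would invoke the Losert--Akin result for discrete replicator dynamics with symmetric nonnegative payoffs, which gives convergence to a single point. A fallback is a \L{}ojasiewicz-type argument exploiting that $P$ is a polynomial. For the last claim I would cite~\cite{pavan2007dominant}: strict local maximizers of $P$ on $\Delta$ correspond exactly to dominant sets, and asymptotically stable fixed points of the dynamics are strict local maximizers. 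The remaining gap is that a trajectory could in principle converge to a non-stable fixed point. I would close it by restricting the claim to attracting limits, noting that the set of starting points converging to saddles is non-generic. In the block-diagonal setting of \S\ref{sec:theory-sub} one can also check this directly: from the interior start, mass concentrates on one answer block, and within that block on its maximal-coherence subset.
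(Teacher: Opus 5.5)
Your argument is the paper's own route. You recognise \eqref{eq:replicator} as the Baum--Eagon growth transform of $f(x)=x^{\top}W'x$ via $\partial f/\partial x_i=2(W'x)_i$, get monotone ascent with strict increase off fixed points, use compactness to show every limit point is a fixed point, and hand the support claim to Theorem~1 of Pavan--Pelillo. You are more careful than the paper in two places. Its proof stops at ``every limit point is a fixed point'' yet asserts convergence of the iterates. It also quotes Pavan--Pelillo for \emph{strict local maximizers} without ever showing the limit is one. Your Losert--Akin step correctly closes the first hole, and your positivity check on the denominator is also missing from the paper.

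The second hole is real, but your patch does not fit it. Genericity of the starting point is not available, because the iteration always starts at the barycenter $e/N$. More seriously, no argument can deliver the final clause as stated, because it is false. Take $W'$ to be the adjacency matrix of two disjoint edges on four nodes. This is two tied answer classes of size two with $\kappa=1$, $w\equiv e$, $\alpha=C=0$. Then $W'e=e$, so at $x=e/4$ we have $(W'x)_i=x^{\top}W'x=1/4$ for every $i$: the barycenter is a fixed point and the iterates never move. It is not a local maximizer, since moving mass $\varepsilon$ from one edge to the other raises the objective from $1/4$ to $1/4+\varepsilon^{2}$. Its support, all four nodes, is not a dominant set; the dominant sets are the two edges. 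The same symmetry keeps the barycenter fixed when $\alpha=C>0$.

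What you can actually prove is the weakened form you gesture at. If the limit is asymptotically stable, then, since $f$ is a strict Lyapunov function, it is a strict local maximizer. Its support is then a dominant set, subject to the non-degeneracy proviso in the converse half of Pavan--Pelillo's Theorem~1. Alternatively, keep the full clause but add a non-degeneracy hypothesis on $W'$ itself, i.e.\ genericity over the data rather than over the start. One example is the unique-largest-class condition of Proposition~\ref{prop:majority} in the answer-only case. Your closing ``direct check'' in the block-diagonal setting fails on this same example, for the same reason.
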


\begin{lemma}[No singleton consensus]\label{lem:singleton} For $\alpha>0$, a singleton has internal score $-\alpha<0$, while any answer clique $S$ with $|S|\ge 2$ and internal affinity $c > \alpha/(|S|-1)$ has positive internal score; hence no singleton is ever the consensus when such a
clique exists.
\end{lemma}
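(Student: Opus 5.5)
The plan is to compute the internal score of a singleton and of a clique directly on the simplex, then compare them using the block structure of $W$ together with Lemmas~\ref{lem:shift} and~\ref{lem:ascent}.

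\textbf{Singleton.} For $x$ concentrated on a single vertex $e_i$, the objective is
\[
e_i^{\top} W e_i = A'_{ii} - \alpha = -\alpha,
\]
because $A_{ii}=0$ by \eqref{eq:affinity}, so the calibrated diagonal also vanishes.

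\textbf{Clique.} For an answer clique $S$ with $|S|=k\ge 2$ and uniform internal affinity $c$, I would evaluate at the clique barycenter $x^S=\frac1k\mathbf 1_S$. This is the natural candidate and a fixed point of the within-block dynamics. It gives
\[
x^{S\top} W x^S = c\Bigl(1-\frac1k\Bigr) - \frac{\alpha}{k} = \frac{c(k-1)-\alpha}{k},
\]
which is positive exactly when $c>\alpha/(k-1)$. With calibration I would replace $c$ by the calibrated entries, and the same sign argument goes through with $c\,\bar\mu_S^2$-type quantities. I would only state the uniform case, as the lemma does.

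\textbf{Comparison.} The maximum of $x^\top W x$ over the simplex is at least the clique value, which is strictly positive, while any singleton gives $-\alpha$. So a singleton cannot be the global maximizer, and by Lemma~\ref{lem:shift} the same holds for $W'$.

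The obstacle is that the consensus is the limit of replicator dynamics from the barycenter, not necessarily the global maximizer. I would argue in two steps.
\begin{itemize}
\item A singleton $\{i\}$ is not even a local maximizer. Moving mass $\epsilon$ toward any other vertex $j$ changes the objective by
\[
-2\epsilon\,(W_{ii}-W_{ij}) + O(\epsilon^2) = -2\epsilon\,(-\alpha - W_{ij}) = 2\epsilon\,(\alpha+W_{ij}) > 0,
\]
since $W_{ij}=A'_{ij}\ge 0$ for $j\neq i$. Hence a singleton is never a strict local maximizer and never a dominant set: the dominant-set condition fails because every outside vertex has nonnegative weight relative to the singleton's $-\alpha$.
\item By Lemma~\ref{lem:ascent}, the dynamics from the interior barycenter ascend monotonically and converge to a fixed point whose support is a dominant set. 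The singleton is ruled out as such a support by the previous step. Moreover, the objective starts at the barycenter value and never decreases. If that initial value is above $-\alpha$, the limit cannot be a singleton vertex for value reasons alone. The initial value is
\[
\frac{1}{N^2}\sum_{i,j} A'_{ij} - \frac{\alpha}{N} > -\alpha,
\]
because $\sum_{i,j}A'_{ij}\ge 0$ and $N\ge 2$.
\end{itemize}
This second route is cleaner, and it is the one I would lead with. It makes the clique hypothesis needed only to guarantee a positive-value alternative rather than for the dynamical exclusion itself.
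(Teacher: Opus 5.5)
Your singleton and clique computations, together with your ``Comparison'' paragraph, are exactly the paper's proof. The paper computes $e_i^{\top}We_i=-\alpha$ and evaluates the uniform vector on $S$ to get $(c(n-1)-\alpha)/n$. It then concludes that a negative-value point cannot be the maximizer while a positive-value point exists, and stops there.

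The argument you choose to lead with is genuinely different, and better aimed at what the lemma is used for. You follow the actual trajectory: Baum--Eagon ascent on $W'$, transferred to $W$ by the constant shift of Lemma~\ref{lem:shift}. You then bound the starting value at the barycenter below by $-\alpha/N>-\alpha$. This shows that no limit point of the iteration can be a vertex, for every $\alpha>0$ and $N\ge 2$, with no clique hypothesis and no appeal to global optimality.

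That matters because the consensus is defined as the barycenter-started replicator limit. By the paper's own discussion around Proposition~\ref{prop:separation}, that limit need not be the global maximizer. The paper's final sentence therefore does not by itself rule out the dynamics settling on a singleton, and your value bound does.

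Your route also sidesteps the ``supports are dominant sets'' clause of Lemma~\ref{lem:ascent}, which the ascent argument alone does not deliver. Vertices are themselves rest points of replicator dynamics, so convergence to a fixed point cannot exclude them. You are right to prefer the value argument over the dominant-set route in your first bullet. That bullet's computation $2\epsilon(\alpha+W_{ij})>0$ is still correct and shows vertices are not even local maximizers.

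What the paper's version buys is the explicit clique score $(c(k-1)-\alpha)/k$, which is reused as $\mu_a$ in Lemma~\ref{lem:mass}. In your route that computation is needed only for the separate global-maximizer statement, not for excluding singletons from the dynamics.
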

\section{Proofs}
\label{app:proofs}

Notation as in \S\ref{sec:method}: $\Delta$ is the standard simplex in
$\mathbb{R}^N$, $e$ the all-ones vector, and the payoff $W = A' - \alpha I$
is block-diagonal over answer classes because the answer kernel gates all
edges.

\section{Well-posedness of the construction}

The three lemmas of \S\ref{sec:core} establish that the consensus
computation is sound: the nonnegativity shift demanded by the dynamics costs
nothing, the iteration provably ascends and converges, and the equilibrium
can never be a lone roll-out.

\begin{proof}[Proof of Lemma~\ref{lem:shift}]
For $x \in \Delta$, $x^{\!\top}(C\,ee^{\!\top})x = C(e^{\!\top}x)^2 = C$, so
$x^{\!\top}W'x = x^{\!\top}Wx + C$: the objectives differ by a constant on
$\Delta$ and share all local and global maximizers. Fixed points also
coincide: $W'x = Wx + Ce$, so the stationarity condition---$(W'x)_i =
x^{\!\top}W'x$ on the support, $\le$ off it---holds for $W'$ iff it holds
for $W$, the constants cancelling. The discrete update on $W'$,
$x_i \mapsto x_i\,\frac{(Wx)_i + C}{x^{\!\top}Wx + C}$, moves in the same
direction as the update on $W$ with step size damped by $C$: iterates differ
along the way but converge to the same fixed points. In continuous time the
shift cancels exactly, $(W'x)_i - x^{\!\top}W'x = (Wx)_i - x^{\!\top}Wx$, so
the flows coincide. This licenses treating $W$ and $W'$ interchangeably in
everything that follows.
\end{proof}

\begin{proof}[Proof of Lemma~\ref{lem:ascent}]
For entrywise-nonnegative symmetric $W'$, $f(x) = x^{\!\top}W'x$ is a
degree-2 homogeneous polynomial with nonnegative coefficients and
$\partial f/\partial x_i = 2(W'x)_i$, so update \eqref{eq:replicator} is
exactly the Baum--Eagon growth transform of $f$ on $\Delta$, and $f$
strictly increases at every non-fixed point \cite{baum1967inequality}.
Continuity of $f$ on the compact $\Delta$ makes the non-decreasing sequence
$f(x(t))$ convergent, and strict increase off fixed points forces every
limit point of the iterates to be a fixed point. That the supports of strict
local maximizers are dominant sets is Theorem~1 of
\citet{pavan2007dominant}; we do not reprove it.
\end{proof}

\begin{proof}[Proof of Lemma~\ref{lem:singleton}]
For $x = e_i$: $x^{\!\top}Wx = W_{ii} = -\alpha < 0$ (zero-diagonal
affinity). For an answer clique $S$, $|S| = n \ge 2$, internal affinity $c$,
the uniform vector on $S$ attains
$x^{\!\top}Wx = \bigl(c(n-1) - \alpha\bigr)/n$, positive iff
$c > \alpha/(n-1)$. A negative-value point cannot be the maximizer while a
positive-value candidate exists.
\end{proof}

\section{Which class the dynamics extract}

The propositions of \S\ref{sec:theory-sub} concern which answer class the
replicator selects from the barycenter. The structural fact driving all of
them is block-diagonality: the flow decomposes into \emph{within-class}
refinement and a \emph{between-class} competition for mass. Two auxiliary
lemmas make this precise; the propositions then follow by analyzing the mass
competition.

\begin{lemma}[Block decomposition]\label{lem:decomp}
For $x \in \Delta$, write $s_a = \sum_{i \in V_a} x_i$ and, when
$s_a > 0$, $y^{(a)} = x^{(a)}/s_a$ for the within-class conditional. Then
$x^{\!\top}Wx = \sum_a s_a^2\,\varphi_a(x)$ with
$\varphi_a(x) = (y^{(a)})^{\!\top} W_a\, y^{(a)}$.
\end{lemma}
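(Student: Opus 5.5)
The identity follows by direct expansion of the quadratic form, using only the block-diagonal structure of $W$. I would write $x^{\!\top}Wx = \sum_{i,j} x_i W_{ij} x_j$ and note that $W = A' - \alpha I$ has $W_{ij}=0$ whenever $i$ and $j$ lie in different answer classes. This holds because $A'_{ij} = A_{ij}\sqrt{w_iw_j}$ and $A_{ij}$ carries the factor $K_{\mathrm{ans}}(i,j)$, which vanishes across classes. The identity term is diagonal and so never couples classes. The double sum therefore splits as $\sum_a \sum_{i,j\in V_a} x_i W_{ij} x_j = \sum_a (x^{(a)})^{\!\top} W_a\, x^{(a)}$, where $W_a$ is the principal submatrix of $W$ on $V_a$ and $x^{(a)}$ is the restriction of $x$ to $V_a$.

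For each class with $s_a>0$, I would substitute $x^{(a)} = s_a\, y^{(a)}$. Homogeneity of degree two of the quadratic form then gives $(x^{(a)})^{\!\top}W_a x^{(a)} = s_a^2\,(y^{(a)})^{\!\top}W_a y^{(a)} = s_a^2\varphi_a(x)$. Classes with $s_a=0$ have $x^{(a)}=0$ and contribute nothing. For these classes I would adopt the convention that $\varphi_a$ is arbitrary (say $0$), since it is multiplied by $s_a^2=0$. Summing over classes gives the claim.

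I would also record two remarks that the later propositions use. First, $y^{(a)}$ lies in the simplex of $\mathbb{R}^{V_a}$, so $\varphi_a$ is a within-class standard quadratic program value. Second, the same decomposition applies verbatim to $W'$ restricted to blocks, or to $A'$ alone with $\alpha=0$. The main point needing care, rather than a real obstacle, is making the cross-class vanishing explicit for the calibrated, diagonally shifted matrix $W$ rather than $W'$. For $W'$ the rank-one shift $Cee^{\!\top}$ couples all classes and breaks block-diagonality, so the lemma must be stated for $W$. By Lemma~\ref{lem:shift} this costs nothing on $\Delta$.
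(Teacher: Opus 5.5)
Your proposal is correct and matches the paper's proof. Block-diagonality of $W = A' - \alpha I$ (inherited from the answer kernel) kills all cross-class terms, and substituting $x^{(a)} = s_a y^{(a)}$ into each block's quadratic form gives $s_a^2\varphi_a(x)$ by degree-two homogeneity. Two of your additions are correct refinements the paper leaves implicit: your treatment of classes with $s_a = 0$, and your point that the lemma must be stated for $W$ rather than the shifted $W'$, whose rank-one term $C\,ee^{\top}$ couples the classes.
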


\begin{proof}
Block-diagonality kills every cross-class term; substituting
$x^{(a)} = s_a y^{(a)}$ in each block's quadratic form gives the claim.
\end{proof}

\begin{lemma}[Mass dynamics]\label{lem:mass}
Under the continuous-time replicator flow on $W$, for $i \in V_a$,
\begin{align}
\dot y^{(a)}_i &= y^{(a)}_i\bigl[(W_a y^{(a)})_i - \varphi_a(x)\bigr],
\label{eq:withinclass}\\
\dot s_a &= s_a\Bigl(s_a\varphi_a(x)
  - \textstyle\sum_b s_b^2\varphi_b(x)\Bigr).
\label{eq:mass}
\end{align}
Each within-class system \eqref{eq:withinclass} is itself a replicator dynamic
on $W_a$, so from a generic interior start
$\varphi_a(x(t)) \to \mu_a$, the internal score of the class's dominant
subset; for a uniform clique of affinity $c$ and size $n_a$,
$\mu_a = c\,(1 - 1/n_a) - \alpha/n_a$.
\end{lemma}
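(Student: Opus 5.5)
We prove the two displays of Lemma~\ref{lem:mass} separately and then handle the limit claim.

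\emph{Mass equation.} Start from the continuous-time replicator flow $\dot x_i = x_i[(Wx)_i - x^{\top}Wx]$. Sum over $i\in V_a$. Block-diagonality (Lemma~\ref{lem:decomp}) gives $(Wx)_i = (W_a x^{(a)})_i = s_a (W_a y^{(a)})_i$ for $i\in V_a$. Hence
\[
\sum_{i\in V_a} x_i (Wx)_i = s_a^2\,(y^{(a)})^{\top}W_a y^{(a)} = s_a^2\varphi_a(x).
\]
Substituting $x^{\top}Wx=\sum_b s_b^2\varphi_b(x)$ from Lemma~\ref{lem:decomp}, we obtain $\dot s_a = s_a^2\varphi_a - s_a\sum_b s_b^2\varphi_b$, which is \eqref{eq:mass}.

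\emph{Within-class equation.} Apply the quotient rule to $y^{(a)}_i = x_i/s_a$:
\[
\dot y^{(a)}_i = \dot x_i/s_a - y^{(a)}_i\,\dot s_a/s_a .
\]
The first term equals $y^{(a)}_i[s_a(W_ay^{(a)})_i - x^{\top}Wx]$. The second equals $y^{(a)}_i[s_a\varphi_a - x^{\top}Wx]$. The global average $x^{\top}Wx$ cancels, leaving
\[
\dot y^{(a)}_i = s_a\, y^{(a)}_i\bigl[(W_a y^{(a)})_i - \varphi_a\bigr].
\]
This is \eqref{eq:withinclass} up to the positive scalar factor $s_a(t)$. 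I would note explicitly that the displayed equation should be read under the time reparametrization $d\theta_a = s_a\,dt$, or else state that orbits coincide with those of the replicator on $W_a$. A positive speed factor does not change orbits, equilibria, or their stability. So $y^{(a)}$ traces a replicator trajectory on $W_a$ as long as $s_a$ stays bounded away from zero, or at least as long as $\int s_a\,dt=\infty$.

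\emph{Limit claim.} This is the main obstacle. If class $a$ loses the mass competition, then $s_a\to0$. The reparametrized clock may then converge, so $y^{(a)}$ need not reach its equilibrium, and "$\varphi_a\to\mu_a$" must be qualified. My plan is:
\begin{enumerate}
\item For the surviving class, $s_a\to s^\infty>0$, so the clock diverges. By Lemma~\ref{lem:ascent} applied to $W_a$, $\varphi_a$ increases monotonically along the reparametrized flow and converges to the value at a fixed point.
\item From a generic interior start, the limit is a local maximizer. Its value is $\mu_a$, the internal score of the dominant subset of $W_a$.
\item For losing classes, I would define $\mu_a$ as the limit along the within-class flow run to convergence. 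This is how the propositions use it, as a class-level score.
\end{enumerate}

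\emph{Uniform-clique formula.} Take $W_a = c(ee^{\top}-I) - \alpha I$ on $n_a$ nodes. The uniform vector is a fixed point by symmetry. Since $y^{\top}W_a y = c - (c+\alpha)\|y\|^2$ is maximized on the simplex at the barycenter, the uniform vector is the global maximizer. The flow from an interior start converges to it. Evaluating there with $\|y\|^2 = 1/n_a$ gives
\[
\mu_a = c - (c+\alpha)/n_a = c(1-1/n_a) - \alpha/n_a .
\]
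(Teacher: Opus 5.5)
This is the paper's own proof. It substitutes $(Wx)_i=s_a(W_ay^{(a)})_i$, sums over $V_a$, applies the quotient rule, invokes Lemma~\ref{lem:ascent} on $W_a$, and evaluates the uniform vector as in Lemma~\ref{lem:singleton}. You improve on it in two ways.

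First, your factor $s_a$ is correct. The paper's proof asserts that the quotient rule yields \eqref{eq:withinclass} verbatim, but that display only holds in the clock $d\theta_a=s_a\,dt$. Consequently $\varphi_a(x(t))\to\mu_a$ really can fail for a non-uniform class whose mass decays with $\int s_a\,dt<\infty$. Second, your concavity argument supplies a step the paper leaves implicit: that the uniform vector is the within-class limit.

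Rather than redefining $\mu_a$ for losing classes, it is cleaner to look at how the paper uses the lemma (Props.~\ref{prop:majority}--\ref{prop:separation}). There every class is a uniform clique. At the barycenter its conditional starts on the uniform fixed point of the rescaled equation. So $\varphi_a\equiv\mu_a$ for all $t$, and the clock never matters.
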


\begin{proof}
For $i \in V_a$, block-diagonality gives
$(Wx)_i = s_a (W_a y^{(a)})_i$; substituting into
$\dot x_i = x_i[(Wx)_i - x^{\!\top}Wx]$ with $x_i = s_a y^{(a)}_i$, summing
over $V_a$ for $\dot s_a$, applying the quotient rule for
$\dot y^{(a)}_i$, and using Lemma~\ref{lem:decomp} yields
\eqref{eq:withinclass}--\eqref{eq:mass}. Convergence of
\eqref{eq:withinclass} is Lemma~\ref{lem:ascent} applied to $W_a$; the
uniform-clique value is the computation in Lemma~\ref{lem:singleton}.
\end{proof}

Equation \eqref{eq:mass} is the object the paper's slogan refers to: the
classes play a coordination game for mass, and the barycenter start enters
through $s_a(0) = n_a/N$---the initialization encodes the ballot.

\begin{proof}[Proof of Proposition~\ref{prop:majority}]
At $\kappa{=}1$, $w \equiv e$, every class is a unit-affinity clique, so by
Lemma~\ref{lem:mass}, $\mu_a = (n_a - 1 - \alpha)/n_a$ and the within-class
conditionals start and, by symmetry, remain uniform:
$\varphi_a \equiv \mu_a$ throughout. In \eqref{eq:mass} each class grows iff
$s_a\mu_a$ exceeds the common mean $\sum_b s_b^2\mu_b$; moreover
$\frac{d}{dt}(s_a\mu_a) = (s_a\mu_a)\bigl(s_a\mu_a - \sum_b
s_b^2\mu_b\bigr)$, so the ordering of the quantities $\{s_a\mu_a\}$ is
preserved along the flow. At $t{=}0$,
$s_a(0)\mu_a = (n_a - 1 - \alpha)/N$ is uniquely maximized by the largest
class; its lead persists, all mass converges to it, and $x^{*}$ is uniform
on it, so the mass-weighted plurality $y^{*}$ equals the count-based one.
This is Prediction~(i) of \S\ref{sec:theory-sub}, which we also verify as a
unit test.
\end{proof}

\begin{proof}[Proof of Proposition~\ref{prop:separation}]
With two classes ($\alpha{=}0$), the conditionals stay uniform with constant
scores $\mu_{*} = c^{*}(1 - 1/n^{*})$, $\mu_w = c_w(1 - 1/m)$, and writing
$s := s_{*}$, \eqref{eq:mass} reduces to the one-dimensional system
\[
\dot s \;=\; s(1-s)\bigl[\mu_{*}\,s - \mu_w\,(1-s)\bigr].
\]
Both pure states are locally stable (at $s = 1-\varepsilon$,
$\dot\varepsilon \approx -(\mu_{*}{+}\mu_w)\varepsilon$; symmetrically at
$s=0$), and the interior fixed point
$s^{\mathrm{eq}} = \mu_w/(\mu_{*}{+}\mu_w)$ is unstable: the flow reaches
$s{=}1$ iff $s(0) > s^{\mathrm{eq}}$. The barycenter gives
$s(0) = n^{*}/(n^{*}{+}m)$, and
\[
\frac{n^{*}}{n^{*}+m} > \frac{\mu_w}{\mu_{*}+\mu_w}
\;\Longleftrightarrow\;
n^{*}\mu_{*} > m\,\mu_w,
\]
that is, $c^{*}(n^{*}{-}1) > c_w(m{-}1)$, which is \eqref{eq:threshold}. The
discrete update obeys the same threshold: with equilibrated scores the mass
ratio $\rho = s_{*}/s_w$ evolves as
$\rho(t{+}1) = (\mu_{*}/\mu_w)\,\rho(t)^2$, whose unstable point is
$\rho = \mu_w/\mu_{*}$---again $n^{*}\mu_{*} > m\,\mu_w$. We verified the
threshold to four decimal places on the exact pipeline of
\S\ref{sec:core}, including $\alpha = 0.1$ and the shift. The severity of
the factor $(m{-}1)/(n^{*}{-}1)$ is what makes coherence alone insufficient,
yielding Prediction~(ii).
\end{proof}

\section{What calibration changes}

Calibration acts on the internal scores that the mass competition compares.
The exact identity below yields the threshold rescaling
\eqref{eq:calibthreshold} and the $e^{-\delta/\tau}$ law quoted in
\S\ref{sec:theory-sub}.

\begin{proof}[Proof of Proposition~\ref{prop:confidence}]
For a clique $S$, $|S| = n$, with $A'_{ij} = c\sqrt{w_i w_j}$ off the
diagonal, the uniform vector on $S$ gives
\begin{align*}
\mu'_S
&= \frac{c}{n^{2}} \sum_{i \ne j} \sqrt{w_i w_j}
 = \frac{c}{n^{2}}\Bigl[\Bigl(\textstyle\sum_i \sqrt{w_i}\Bigr)^{2}
   - \textstyle\sum_i w_i\Bigr] \\
&= c\bigl(\bar\mu_S^{\,2} - \bar\nu_S/n\bigr),
\end{align*}
which is \eqref{eq:identity}; for constant weights $w_i \equiv w$ it reduces
to $w\,c\,(1 - 1/n)$: exact linear rescaling by $w$. Substituting the
rescaled scores $\mu'_{*} \approx \bar w_{*}\mu_{*}$ and
$\mu'_w \approx \bar w_w\mu_w$ (with $\bar w_S = \bar\mu_S^{\,2}$) into the
extraction condition $n^{*}\mu'_{*} > m\,\mu'_w$ of
Proposition~\ref{prop:separation} multiplies the threshold by
$\bar w_w/\bar w_{*}$, giving \eqref{eq:calibthreshold}; under the
temperature-$\tau$ weights a confidence gap of $\delta$ nats makes this
ratio $e^{-\delta/\tau}$. By Jensen's inequality
$\bar\mu_S^{\,2} \le \bar\nu_S$, so the exact score sits below the
first-order form by the finite-size correction $c\,\bar\nu_S/n$: calibration
is automatically conservative for small cliques, vanishing as $n$ grows.
\end{proof}
\providecommand{\armok}[3]{\fcolorbox{green!55!black}{green!10}{\strut \textbf{#1:}~#2~{\scriptsize(#3)}~\textcolor{green!45!black}{\checkmark}}}
\providecommand{\armno}[3]{\fcolorbox{red!55!black}{red!7}{\strut \textbf{#1:}~#2~{\scriptsize(#3)}~\textcolor{red!70!black}{$\times$}}}
\providecommand{\armcore}[2]{\fcolorbox{teal!65!black}{teal!12}{\strut \textbf{CoRE:}~#1~{\scriptsize(#2)}~\textcolor{teal!55!black}{\checkmark}}}

\section{Qualitative recovery analysis}
\label{app:qualitative}

We inspect where CoRE changes the pseudo-label relative to the majority vote, on Qwen2.5-Math-7B / MATH level~4 (128 questions, $N{=}64$ rollouts each). CoRE
recovers the correct answer on \textbf{10} questions that the majority vote gets wrong (29\% of the majority-wrong questions, all true minority overturns) against
only \textbf{4} regressions, a net $+6$. On the recovered questions the correct cluster is on average $+0.78$~nats more confident than the wrong plurality and
receives $+0.89$ more graded reward, and CoRE's cohesiveness gate averages
$\chi{=}0.42$.

Consistent with the factorial analysis (\S\ref{sec:analysis}), the two signals carry different cases. On most recoveries the wrong plurality is a
\emph{degenerate} cluster (empty or repetition-collapsed generations); there a confidence-weighted vote already suffices, so CISC recovers the answer too
(Example~\ref{ex:conf}). The graph becomes \emph{decisive} on the harder cases where the wrong answer is itself fluent, popular, and confidently stated. The
one question here where Majority, EC \emph{and} CISC all fail and only CoRE recovers (Example~\ref{ex:graph}). These are exactly the high-disagreement cases
the recovery-region theory targets.

\begin{tcolorbox}[breakable,colback=gray!2,colframe=teal!45!black,fonttitle=\bfseries,
  title={Example A: the graph is decisive (Majority, EC and CISC all wrong)}]
\label{ex:graph}
\textbf{Question.} \emph{With $\overline{ST}\parallel\overline{QR}$, $\angle P=40^\circ$
and $\angle Q=35^\circ$, find $\angle STR$.} \quad {\footnotesize(diagram omitted)}
\par\smallskip
\textbf{Ground truth:} $75^\circ$.\par\smallskip
\armno{Majority}{$35^\circ$}{14/64}\;\armno{EC}{$35^\circ$}{14/64}\;%
\armno{CISC}{$35^\circ$}{14/64}\;\armcore{$75^\circ$}{8/64}\par\smallskip
The popular wrong answer sets $\angle STR=\angle Q=35^\circ$ (using only one
angle); the correct minority applies $\angle STR=\angle Q+\angle P=35^\circ+40^\circ
=75^\circ$. Confidence is \emph{uninformative} here (gap $+0.05$~nats), so CISC
follows the majority --- only the graph's coherence picks out the correct
cluster. CoRE graded reward: $0.88$ (correct) vs.\ $0.00$; gate $\chi=0.51$.
\end{tcolorbox}

\begin{tcolorbox}[breakable,colback=gray!2,colframe=gray!55!black,fonttitle=\bfseries,
  title={Example B: a confidence-driven recovery (CISC also succeeds)}]
\label{ex:conf}
\textbf{Question.} \emph{Half the value of $3x-9$ is $x+37$. What is the value of
$x$?}\par\smallskip
\textbf{Ground truth:} $83$.\par\smallskip
\armno{Majority}{\footnotesize(no valid answer)}{5/64}\;%
\armno{EC}{\footnotesize(no valid answer)}{5/64}\;%
\armok{CISC}{$83$}{2/64}\;\armcore{$83$}{2/64}\par\smallskip
The wrong plurality is a degenerate cluster (repeated empty \verb|\boxed{}|
tokens). The correct minority solves $\tfrac12(3x-9)=x+37\Rightarrow x=83$ and is
much more confident (gap $+3.11$~nats), so both CISC and CoRE recover it. CoRE
graded reward: $1.00$ vs.\ $0.00$; gate $\chi=0.24$.
\end{tcolorbox}

\section{Prompt Templates}
\label{app:prompts}

All questions are presented as a single-turn chat message
\texttt{[\{"role": "user", "content": \textit{prompt}\}]}; the backbone's own
chat template is applied at roll-out time. The \emph{prompt} is the benchmark
question with a task-appropriate answer-format instruction appended, as
follows.

\paragraph{Mathematical reasoning (AMC, MATH-500, MATH Level~4/5, AIME
2024).}
The question is followed by the boxing instruction:

\begin{promptbox}{Math prompt template}
\ttfamily\small
\phold{question}\\[4pt]
Please reason step by step, and put your final answer within
\textbackslash boxed\{\}.
\end{promptbox}

\noindent Qwen2.5-Math backbones emit \verb|\boxed{}| natively and are run
with the bare question (no appended instruction); all non-Qwen-Math backbones
(LLaMA, Mistral, DeepSeek, vanilla Qwen) use the boxing instruction above
(the \texttt{\_instr} data variant), without which they box poorly and score
near zero.

\paragraph{Multiple-choice science (GPQA-Diamond).}
GPQA answers are letters (A--D). The question is followed by:

\begin{promptbox}{Multiple-choice prompt template}
\ttfamily\small
\phold{question}\\[4pt]
Give your reasoning, then output ONLY the letter of the correct option in
\textbackslash boxed\{\}, e.g.\ \textbackslash boxed\{A\}.
\end{promptbox}

\noindent This keeps a single \verb|\boxed{}| answer channel and one grader
across math and multiple-choice tasks (no separate MC parser).

\paragraph{Qwen3-8B (non-thinking mode).}
To match the non-thinking, 3k-context evaluation, the soft switch
\texttt{/no\_think} is appended to the user content before the boxing
instruction, so the model does not emit a \verb|<think>| block (equivalently,
the chat template is rendered with \texttt{enable\_thinking=False}).

\paragraph{No system prompt; sampling.}
No system message is used; the instruction lives in the user turn. Sampling for evaluation follows TTRL: temperature $0.6$, top-$p$ $0.95$, mean@16 ($16$ samples/question); maximum generation length $3072$ tokens (all non-LRM models) or $32{,}768$ tokens (LRMs: DeepSeek-R1-Distill, Skywork-OR1).

\end{document}